\documentclass[11pt]{article}

\usepackage[margin=1in]{geometry}
\usepackage{amsmath,amssymb,amsthm}
\usepackage{algorithm}
\usepackage{algorithmic}
\usepackage{booktabs}
\usepackage{hyperref}
\usepackage{enumitem}
\usepackage{graphicx}

\title{Bypassing Minimization Bias:\\
A Shift-Invariant Variance Estimator for Off-Equilibrium Local Learning Coefficients}

\author{Yingjia Cai}
\date{June, 2026}

\newtheorem{proposition}{Proposition}

\begin{document}

\maketitle

\begin{abstract}
Singular Learning Theory leverages the Local Learning Coefficient (LLC) to quantify the geometry of neural network loss landscapes. However, mean-energy LLC estimators depend explicitly on an additive loss baseline, typically an estimate of the local minimum. During transient, off-equilibrium training phases, this minimum is unknown; substituting it with the lowest noisy mini-batch loss induces a systematic minimization bias that distorts the geometric measurement. In this paper, we propose the \emph{Shift-Invariant Variance Estimator} (SIVE), a variance-based local LLC probe that structurally eliminates the unknown additive baseline through the variance operator. Combining this shift-invariant observable with an explicit correction derived from the Law of Total Variance, SIVE separates geometric loss fluctuations from mini-batch evaluation noise. Controlled experiments on analytically tractable toy models show that SIVE recovers the expected finite-temperature geometric signal in regimes where anchored mean estimators fail. Applied to deep neural networks, SIVE provides a robust, localized online diagnostic for tracking structural phase transitions throughout training.
\end{abstract}

\section{Introduction}

The parameter space of deep neural networks forms a singular energy landscape characterized by extensive structural degeneracy. Singular Learning Theory (SLT) provides a rigorous mathematical framework to understand this complex geometry, leveraging the Local Learning Coefficient (LLC) to measure the effective degrees of freedom in specific local basins.

However, a critical methodological gap prevents the LLC from being used as an online diagnostic tool during training. Current empirical LLC estimators require grounding the local energy function at the exact local minimum, $L^\star$. This poses a structural challenge for off-equilibrium, mid-training checkpoints where $L^\star$ remains inaccessible. In practice, substituting $L^\star$ with the minimum empirical loss observed over noisy mini-batches induces a minimization bias, artificially inflating the estimated energy gap and obscuring the underlying geometric measurement.

Because a mid-training checkpoint is generally not an exact stationary point, deploying localized SGLD requires framing the measurement as an effective local diagnostic. By analyzing this localized tempered posterior through statistical mechanics, we show that the elusive $L^\star$ can be structurally eliminated by the variance operator. Furthermore, we derive an explicit dynamic noise-debiasing correction using the Law of Total Variance to isolate true geometric fluctuations from inherent evaluation noise.

Our main contributions are:
\begin{itemize}[leftmargin=2em]
    \item \textbf{The Shift-Invariant Variance Estimator (SIVE):} We formulate LLC estimation as an off-equilibrium local probe that structurally bypasses minimization bias. By utilizing the variance operator and deriving an explicit mini-batch noise-debiasing correction via the Law of Total Variance, SIVE decouples geometric fluctuations from stochastic evaluation noise without requiring the unknown local minimum $L^\star$.
    \item \textbf{Controlled Validation and Scaling:} We validate SIVE through controlled toy-model experiments where geometric invariants and error mechanisms are analytically known. Furthermore, we scale our probe to deep neural networks, demonstrating its ability to track structural phase transitions along the SGD trajectory.
\end{itemize}

\section{Background and Problem Formulation}

Standard statistical learning theory assumes regular models where the true parameter is unique and the loss landscape is locally convex. In contrast, deep neural networks are singular \cite{watanabe2009algebraic, wei2022deep}. Their loss landscapes contain degenerate valleys and overlapping parameter regions that encode the same functional mapping. Singular Learning Theory (SLT) quantifies this structural degeneracy using the Real Log Canonical Threshold (RLCT), or the Local Learning Coefficient (LLC), denoted by $\lambda$. Geometrically, the LLC acts as a measure of the effective degrees of freedom; a lower $\lambda$ indicates a singular, degenerate region where the network has compressed its functional representation. Tracking the LLC along the training trajectory has emerged as a promising lens for detecting structural phase transitions, such as grokking \cite{power2022grokking, nanda2023progress} or the stagewise development of internal representations \cite{wei2022deep}. 

Neural network training via Stochastic Gradient Descent (SGD) operates as a non-equilibrium dynamical process \cite{mandt2017stochastic, chaudhari2018stochastic}. Rather than settling into a static point of zero gradient, the SGD trajectory is constantly in transit, navigating between competing solution basins and exploring finite-temperature Gibbs shells. This off-equilibrium nature presents a structural challenge for SLT diagnostics, which are formally formulated around exact minimizers.

To evaluate the LLC computationally, the standard approach simulates a localized tempered posterior using Stochastic Gradient Langevin Dynamics (SGLD) \cite{welling2011bayesian, lau2023local, furman2024estimating}. For a local chart defined by an energy function $K(\theta) = L(\theta) - L^\star$ anchored at a local optimum $L^\star$, the leading-order statistical mechanical identity yields the mean-energy estimator:
\begin{equation}
    \hat{\lambda}_{\mathrm{mean}} \approx n\beta \, \mathbb{E}_{q}[K(\theta)] = n\beta \, \mathbb{E}_{q}[L(\theta)-L^\star],
\end{equation}
where $n$ is the dataset size, $\beta$ is the inverse temperature, and $q$ is the localized tempered posterior. 

While accurate at equilibrium, this mean-energy estimator encounters an operational challenge when deployed as a dynamic diagnostic tool mid-training. Because SGD operates out of equilibrium, the true infimum $L^\star$ of the transient basin remains inaccessible. To navigate this, empirical studies often adopt a retrospective anchoring approach—substituting $L^\star$ with a low-loss reference obtained after additional optimization or at the end of training \cite{furman2024estimating}. This offline substitution is effective for post-hoc analysis, provided the loss landscape floor does not shift during the remaining training epochs.

An alternative approach for online approximation is to set $L^\star$ to the lowest empirical loss observed during the localized SGLD chain ($\hat{L}^\star = \min_m \hat{L}(\theta_m)$). From a statistical perspective, taking the minimum over stochastically evaluated mini-batches introduces a downward shift governed by extreme value statistics. As analyzed in the subsequent sections, this expected discrepancy induces a minimization bias that inflates the estimated energy gap. Addressing this baseline dependency is required to enable consistent, online off-equilibrium LLC tracking.

\section{Method}
\subsection{Trajectory-Conditioned Local Charts}

Let $L(\theta)$ denote the population loss, and let $\theta_{t_k}$ be a checkpoint along the SGD trajectory. The checkpoint is not assumed to be a local minimizer; it only defines the center of a local chart. We localize the probe around $\theta_{t_k}$ using a Gaussian reference density
\begin{equation}
    \phi_k(\theta) \propto \exp\left\{-\frac{\|\theta-\theta_{t_k}\|^2}{2h^2}\right\}.
\end{equation}
Inside this chart, the unknown local basin floor is
\begin{equation}
    L_k^\star = \inf_{\theta\in U_k} L(\theta),
\end{equation}
where $U_k$ denotes the effective support of the local chart. The corresponding local SLT energy is
\begin{equation}
    K_k(\theta) = L(\theta)-L_k^\star.
\end{equation}
Writing $t=n\beta$ for the inverse-temperature scale, the localized tempered posterior is
\begin{equation}
    q_{k,t}(\theta) \propto \exp\{-tK_k(\theta)\}\phi_k(\theta).
\end{equation}
Operationally, the Gaussian reference acts as a tether: it keeps the SGLD probe near $\theta_{t_k}$ while still allowing the loss gradient to explore lower-energy directions within the local basin. This confinement is essential in the off-equilibrium setting. Since a mid-training checkpoint generally satisfies $\nabla L(\theta_{t_k})\neq 0$, an unconstrained Langevin chain need not remain a local probe of that checkpoint; over long times or with an overly large localization radius, it can drift toward lower-loss regions and thereby confound the intended checkpoint-local measurement.

Consequently, the off-equilibrium quantity estimated by SIVE represents an effective local LLC around the transient checkpoint $\theta_{t_k}$. The Gaussian reference acts as a necessary boundary to ensure the probe evaluates the geometry of the current local basin without escaping to disparate regions of the loss landscape. In the following, $\lambda_k$ denotes this localized geometric signal at the chosen operating scale, extending the standard empirical SGLD measurement to off-equilibrium environments.

\subsection{The Double Cancellation of $L_k^\star$}

The primary obstacle in off-equilibrium LLC estimation is the inaccessibility of $L_k^\star$. Our variance-based estimator bypasses this issue because $L_k^\star$ is mathematically eliminated in two distinct stages.

First, $L_k^\star$ cancels from the sampling distribution. Substituting $K_k(\theta)=L(\theta)-L_k^\star$ into the localized posterior gives
\begin{equation}
\begin{aligned}
    q_{k,t}(\theta)
    = \frac{\exp\{-t(L(\theta)-L_k^\star)\}\phi_k(\theta)}{\int \exp\{-t(L(\theta')-L_k^\star)\}\phi_k(\theta')\,d\theta'} = \frac{\exp\{-tL(\theta)\}\phi_k(\theta)}{\int \exp\{-tL(\theta')\}\phi_k(\theta')\,d\theta'}.
\end{aligned}
\end{equation}
Therefore the localized SGLD chain can be run using the ordinary loss gradient; the absolute basin floor is unnecessary for sampling.

Second, $L_k^\star$ cancels from the measurement. The free-energy identities reviewed in Appendix~\ref{app:free_energy_identities} imply that, in the low-temperature SLT regime, the variance observable satisfies
\begin{equation}
    t^2\operatorname{Var}_{q_{k,t}}[K_k(\theta)] = \lambda_k + o(1),
    \qquad t\to\infty.
\end{equation}
The unknown shift is then removed exactly by the variance operator:
\begin{equation}
    \operatorname{Var}_{q_{k,t}}[K_k(\theta)] = \operatorname{Var}_{q_{k,t}}[L(\theta)-L_k^\star] = \operatorname{Var}_{q_{k,t}}[L(\theta)].
\end{equation}
Consequently, $L_k^\star$ is structurally superfluous: it is required neither for sampling the localized posterior nor for computing the variance observable. This dual elimination is the mechanism by which the variance-based estimator circumvents the minimization bias inherent to mean-energy estimators.

\subsection{Mini-Batch Noise Debiasing}

While the theoretical variance elegantly removes $L_k^\star$, empirical evaluation introduces a new challenge. In our framework, we operationally decouple the SGLD parameter dynamics from the geometric measurements. Let $B_{\mathrm{grad}}$ denote the mini-batch size used to compute stochastic gradients for the Langevin drift, and $B_{\mathrm{eval}}$ denote the mini-batch size used to evaluate the loss at a given state.

By the Law of Total Variance, the raw empirical variance over MCMC samples decomposes into the true geometric fluctuations and the inherent observation noise. For a single evaluation mini-batch of size $B_{\mathrm{eval}}$ at state $\theta_m$, we define the noisy observation as:
\begin{equation}
    \hat L_{m,i} = L(\theta_m) + \varepsilon_{m,i}, \qquad \mathbb{E}[\varepsilon_{m,i}\mid \theta_m] = 0, \quad \operatorname{Var}(\varepsilon_{m,i}\mid \theta_m) = \sigma_{\mathrm{eval}}^2(\theta_m,B_{\mathrm{eval}}).
\end{equation}
Thus, the joint variance over both the posterior $q$ and the stochastic batch sampling $\mathcal{B}$ is:
\begin{equation}
    \operatorname{Var}_{q,\mathcal{B}}[\hat L_{m,i}] = \operatorname{Var}_{q}[L(\theta)] + \mathbb{E}_{q}[\sigma_{\mathrm{eval}}^2(\theta,B_{\mathrm{eval}})].
\end{equation}

By drawing $N$ conditionally independent evaluation mini-batches at the same state $\theta_m$, we compute the empirical mean over the evaluation group:
\begin{equation}
    \bar L_m = \frac{1}{N}\sum_{i=1}^N \hat L_{m,i}.
\end{equation}
Because the evaluations are conditionally independent given $\theta_m$, the observation variance is reduced by a factor of $N$, yielding:
\begin{equation}
    \operatorname{Var}_{q,\mathcal{B}}[\bar L_m] = \operatorname{Var}_{q}[L(\theta)] + \mathbb{E}_{q}\left[ \frac{\sigma_{\mathrm{eval}}^2(\theta,B_{\mathrm{eval}})}{N} \right].
\end{equation}

Relying on the uncorrected empirical variance of $\bar L_m$ would still scale the residual observation noise by $(n\beta)^2$, overshadowing the geometric signal $\lambda_k$. We debias this by computing the sample variance within each group to dynamically estimate the local noise profile:
\begin{equation}
    s_m^2 = \frac{1}{N-1}\sum_{i=1}^{N}\left(\hat{L}_{m,i} - \bar{L}_m\right)^2.
\end{equation}
Since $\mathbb{E}_{\mathcal{B}}[s_m^2 \mid \theta_m] = \sigma_{\mathrm{eval}}^2(\theta_m, B_{\mathrm{eval}})$, subtracting this averaged noise profile directly isolates the pure geometric variance. The final debiased shift-invariant LLC estimator is:
\begin{equation}
    \hat{\lambda}_k = (n\beta)^2 \left( \operatorname{Var}_{m=1}^{M} \big[ \bar{L}_m \big] - \frac{1}{M}\sum_{m=1}^{M}\frac{s_m^2}{N} \right)_+,
    \label{eq:debiased-variance}
\end{equation}
where $(\cdot)_+$ acts as a safeguard against finite-sample negative estimates. This approach requires evaluating $N$ independent mini-batches, each containing $B_{\mathrm{eval}}$ samples, at every MCMC step. The parameter $N$ therefore controls a direct tradeoff: larger evaluation groups reduce the residual noise correction error, at the cost of additional loss evaluations.

The debiasing correction above acts on the observation layer: it removes the conditional variance introduced by stochastic loss evaluations given the visited SGLD states. It does not correct bias in the sampling distribution itself, such as finite-step discretization error or stochastic-gradient-induced perturbations of the Langevin drift. In the deep-learning experiments, we therefore interpret SIVE as a scalable fixed-step local probe rather than an asymptotically exact posterior sampler.

\begin{algorithm}[t]
\caption{SIVE: Shift-Invariant Variance Estimator for Off-Equilibrium LLCs}
\label{alg:new_estimator}
\begin{algorithmic}[1]
\STATE \textbf{Input:} Checkpoint $\theta_{t_k}$; inverse-temperature parameters $\beta,n$; localization radius $h$; SGLD step size $\eta$; SGLD steps $M$; evaluation group size $N$; evaluation mini-batch size $B_{\mathrm{eval}}$; gradient mini-batch size $B_{\mathrm{grad}}$
\STATE Initialize $\theta^{(0)}=\theta_{t_k}$
\FOR{MCMC step $m=0$ to $M-1$}
    \STATE Draw \(N\) independent evaluation mini-batches \(\mathcal B^{\mathrm{eval}}_{m,1},\ldots,\mathcal B^{\mathrm{eval}}_{m,N}\), each of size \(B_{\mathrm{eval}}\), and compute \(\hat L_{m,i}=\hat L_{\mathcal B^{\mathrm{eval}}_{m,i}}(\theta^{(m)})\)
    \STATE Compute \(\bar L_m=N^{-1}\sum_{i=1}^N\hat L_{m,i}\) and \(s_m^2=(N-1)^{-1}\sum_{i=1}^N(\hat L_{m,i}-\bar L_m)^2\)
    \STATE Compute stochastic gradient $g_m = \nabla \hat{L}(\theta^{(m)})$ using an independent mini-batch of size $B_{\mathrm{grad}}$
    \STATE Sample $\xi_m\sim \mathcal{N}(0,I)$
    \STATE Update $\theta^{(m+1)} = \theta^{(m)} - \eta n\beta g_m - \eta\frac{\theta^{(m)}-\theta_{t_k}}{h^2} + \sqrt{2\eta}\xi_m$
\ENDFOR
\STATE Compute debiased $\hat{\lambda}_k$ using Eq.~\eqref{eq:debiased-variance} from $\{\bar L_m,s_m^2\}_{m=0}^{M-1}$
\STATE \textbf{Output:} SIVE estimate of the local learning coefficient $\hat{\lambda}_k$
\end{algorithmic}
\end{algorithm}

\section{Experiments}
We evaluate the proposed methodology in two stages: first, on analytically tractable toy models to strictly decouple the exact error mechanisms; second, by tracking the SGD trajectory of a deep neural network to demonstrate its practical utility in a realistic off-equilibrium setting.

\subsection{Controlled Validation on Toy Models}
\label{sec:controlled_toy}

\subsubsection{Experimental Setup}

We consider two analytically tractable local energy models: 
\begin{align}
\text{Multiplicity }m=1: \qquad &K_1(u) = u^2, \\
\text{Multiplicity }m=2: \qquad &K_2(u,v) = u^2v^2.
\end{align}
For both, $\min_\theta K(\theta)=0$. The observed empirical loss is deliberately shifted by a hidden constant basin floor $L_0$ and corrupted by independent evaluation noise $\epsilon \sim \mathcal{N}(0, \sigma^2)$:

\begin{equation}
    \hat L(\theta) = L_0 + K(\theta) + \epsilon.
\end{equation}

We write $\theta=u$ for $K_1$ and $\theta=(u,v)$ for $K_2$, and use $K(\theta)$ to denote either energy when no distinction is needed. This formulation cleanly decouples three components: the true geometric signal $K(\theta)$, the unknown basin floor $L_0=L^\star$, and the stochastic observation noise $\epsilon$. Both models share an asymptotic continuous-time RLCT of $\lambda = 1/2$, though the $m=2$ singularity introduces finite-temperature corrections that shift the practical target below $1/2$ at $t=10^4$ (see Appendix~\ref{app:finite_temperature_m2}).

We simulate localized SGLD with an inverse-temperature scale $t=n\beta=10^4$. To strictly isolate evaluation noise from dynamic optimization noise, SGLD transitions are driven by exact analytical gradients rather than stochastic mini-batches. At each sampled state $\theta_m$, we draw $N=64$ independent noisy observations $\hat{L}_{m,i}$ to compute the group mean $\bar{L}_m$ and within-group sample variance $s_m^2$. The SGLD learning rate is parameterized as $\eta=\texttt{base\_lr}/t$ with $\texttt{base\_lr}=0.05$. Full parameter explorations are detailed in Appendix~\ref{app:stable_operating_point}.

\subsubsection{Estimator Configurations}
We compare four distinct estimator configurations to isolate the impacts of minimization bias and evaluation noise:

\begin{enumerate}[label=\roman*., leftmargin=2em]
    \item \textbf{Oracle Mean-based:} Requires the exact analytical basin floor $L^\star=L_0$ and evaluates strictly on the \textit{noiseless} loss contour. This privileged configuration is unavailable in realistic off-equilibrium settings but provides a ground-truth geometric reference.
    \item \textbf{Naive Mean-based:} Does not require the true $L^\star$, instead substituting it with the empirical minimum along the noisy trajectory, $\hat{L}^\star = \min_m \hat{L}(\theta_m)$. It processes noisy loss evaluations without any noise removal, representing the standard failure mode in deep learning practice.
    \item \textbf{Raw Variance-based:} Bypasses $L^\star$ by applying the shift-invariant variance operator $t^2 \operatorname{Var}[\hat{L}]$ directly to the noisy loss. While immune to minimization bias, it lacks a correction mechanism and remains vulnerable to mini-batch evaluation noise.
    \item \textbf{SIVE:} bypasses the unknown $L^\star$ while actively correcting for stochasticity. It dynamically subtracts the averaged evaluation-noise profile (Eq.~\ref{eq:debiased-variance}) from the noisy loss, isolating the pure geometric structural variance.
\end{enumerate}

\subsubsection{Results and Analysis}

\begin{table}[ht]
\centering
\caption{LLC estimates on controlled noisy toy models. }
\label{tab:toy_results_main}
\begin{tabular}{lcc}
\toprule
Estimator Configuration
& $m=1$: $K(u)=u^2$
& $m=2$: $K(u,v)=u^2v^2$ \\
\midrule
Oracle Mean-based
& $0.5275 \pm 0.0061$
& $0.3980 \pm 0.0424$ \\
Naive Mean-based
& $11.0155 \pm 0.4489$
& $10.9775 \pm 0.3132$ \\
Raw Variance-based
& $6.8263 \pm 0.0336$
& $6.6818 \pm 0.0919$ \\
\textbf{SIVE}
& $\mathbf{0.5755 \pm 0.0355}$
& $\mathbf{0.4310 \pm 0.0961}$ \\
\bottomrule
\end{tabular}
\end{table}

Table~\ref{tab:toy_results_main} reports the scaled LLC estimates averaged over five independent trials. While the asymptotic RLCT for both models is $\lambda=1/2$, the use of a finite inverse-temperature ($t=10^4$) and discrete Langevin steps mathematically shifts the expected operational baselines slightly. For the regular quadratic model, the finite-step ULA references are approximately $0.5263$ for the mean-energy observable and $0.5540$ for the variance observable. For the singular model, the leading finite-temperature references are approximately $0.391$ for the mean-energy observable and $0.380$ for the variance observable. These reference values are discussed in Appendix~\ref{app:finite_temperature_m2} and Appendix~\ref{app:ULA}.

Evaluating against these baselines, the empirical failure modes are consistent with our theoretical error decomposition, fully exposing the vulnerabilities of uncorrected estimators:
\begin{itemize}[leftmargin=2em]
    \item \textbf{Minimization Bias (Naive Mean):} The naive mean estimator overestimates the geometric signal (yielding $\approx 11.0$). Because $\hat{L}^\star$ is the minimum over $M$ noisy evaluations along the trajectory, negative order-statistics inherently drive $\hat{L}^\star$ below the true $L^\star$. Even a microscopic downward shift $\Delta = L^\star - \hat{L}^\star \approx 10^{-3}$ induces an artificial error term $t\Delta \approx 10.0$ at $t=10^4$, which entirely dominates the geometric signal.

    \item \textbf{Noise Amplification (Raw Variance):} By the Law of Total Variance, the raw estimator intrinsically absorbs the additive evaluation noise. Scaled by $t^2$, the theoretical noise injection in our setting is $t^2\sigma^2/N=10^8\times(0.002)^2/64=6.25$. This explains why the uncorrected raw variance outputs $\approx 6.7$.
    \item \textbf{Signal Recovery (SIVE):} SIVE circumvents both failure mechanisms. Since $s_m^2$ empirically estimates $\sigma^2$, the averaged correction $\frac{1}{M}\sum_{m=1}^{M}t^2s_m^2/N$ targets the same artifact $t^2\sigma^2/N=6.25$. By dynamically subtracting this empirical correction, SIVE neutralizes the additive artifact and directly recovers geometric measurements ($0.5755$ and $0.4310$) that align with the finite-temperature operational baselines, without ever estimating $L^\star$.

\end{itemize}
A full derivation of the minimization-bias and noise-amplification mechanisms is provided in Appendix~\ref{app:error_decomposition}.

\subsection{Tracking SGD Dynamics in Deep Neural Networks}
\label{sec:dnn}
\subsubsection{Experimental Setup}
We train a standard 3-layer Multi-Layer Perceptron (MLP) with ReLU activations on the MNIST image classification dataset. The network is optimized using standard Stochastic Gradient Descent (SGD). To capture the full evolutionary trajectory of the model, we save checkpoints at continuous intervals across the training process. This includes the highly transient, off-equilibrium early epochs as well as the near-equilibrium state at convergence. At each checkpoint $\theta_{t_k}$, we initialize a localized SGLD chain using Algorithm~\ref{alg:new_estimator} to measure the local learning coefficient. Implementation details will be discussed in Appendix~\ref{app:dnn_implementation}.

\subsubsection{Estimator Configurations}
Because the absolute true local minimum $L_k^\star$ of an overparameterized neural network is fundamentally unknown mid-training, the privileged Oracle estimator cannot be implemented. Instead, we compare three practical configurations to illustrate the impact of off-equilibrium dynamics:
\begin{enumerate}[label=\roman*., leftmargin=2em]
    \item \textbf{Online Naive Mean-based (Local $\hat{L}^\star$):} Computes the energy gap using the lowest empirical loss encountered \textit{during the current localized SGLD chain}. 
    \item \textbf{Retrospective Mean-based (Final $\hat{L}^\star$):} Represents a natural offline baseline that substitutes the unknown local minimum with the empirical minimum obtained at the end of training, after the network has converged.
    \item \textbf{SIVE:} Evaluates the geometric signal dynamically using Eq.~\eqref{eq:debiased-variance}, structurally bypassing the need to anchor the energy function.
\end{enumerate}

\begin{figure}[ht]
    \centering
    \includegraphics[width=1.0\textwidth]{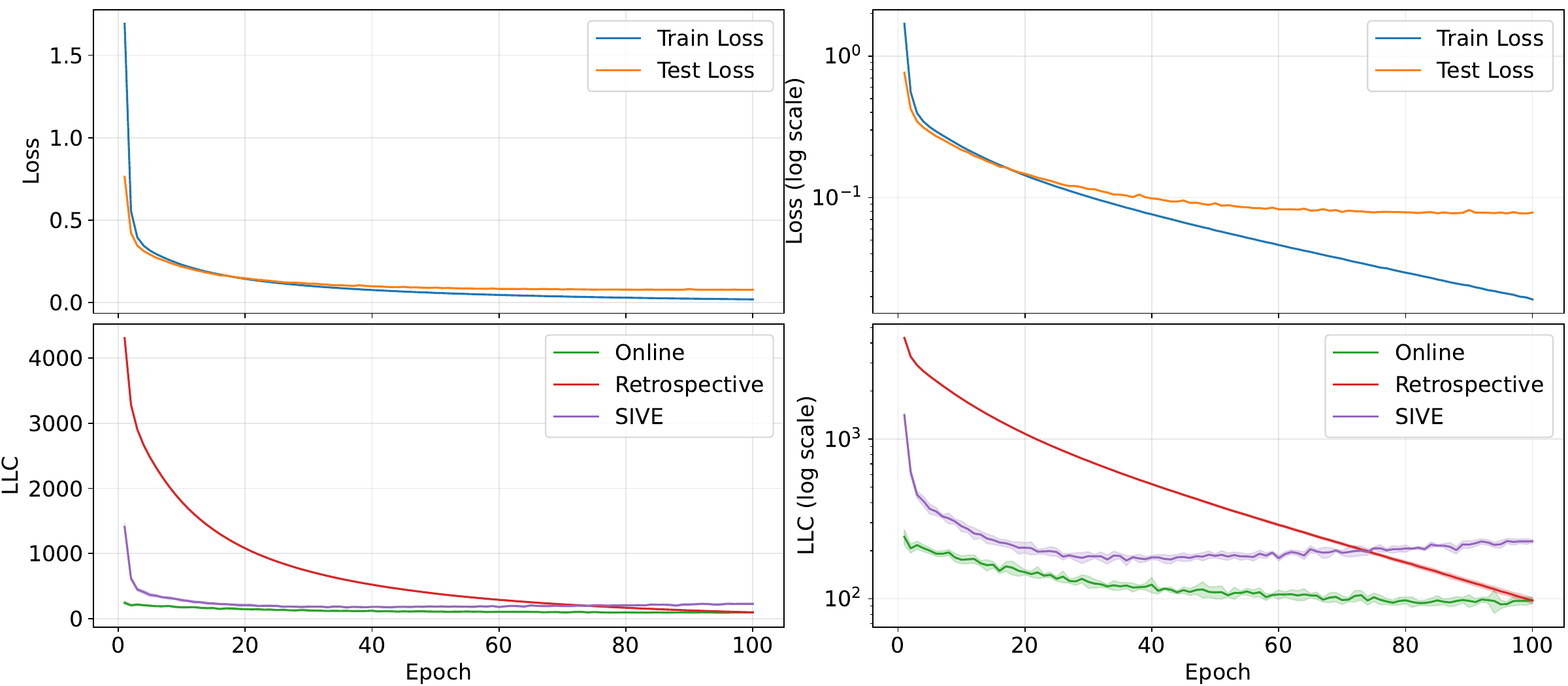}
    \caption{Loss and LLC ($\hat{\lambda}$) tracking during the SGD training trajectory. }
    \label{fig:mnist_dynamics}
\end{figure}

\subsubsection{Results and Analysis}
To prevent sharp geometric transitions from being visually obscured by trajectory misalignment across training runs, Figure~\ref{fig:mnist_dynamics} plots a single representative SGD trajectory focusing strictly on the optimization dynamics (Epochs 1 to 100). The unoptimized random initialization state (Epoch 0) is deferred to Appendix~\ref{app:dnn_raw_results} as a numerical reference. At each evaluated checkpoint, the reported estimates are averaged over five independent localized SGLD trials.

\paragraph{The Artificial Inflation of Retrospective Anchoring.}
The Retrospective Mean-based estimator, representing the standard offline proxy in current SLT literature, exhibits substantial artificial inflation during the early and mid-training phases. At Epoch 1, it reports a systematically inflated $\hat{\lambda} \approx 4305$, which monotonically decays to $\approx 98$ by Epoch 100. By rigidly anchoring the transient energy function to the final converged loss $L_{\text{final}}^\star$, this estimator fundamentally conflates the macroscopic optimization descent distance with local geometric complexity. The transient loss gap acts as a dominant noise term, forcing the estimator to strictly mirror the macroscopic training loss curve and masking any underlying structural phase transitions.

\paragraph{The Structural Collapse of Online Anchoring.}
Conversely, the Online Naive Mean-based estimator attempts to dynamically anchor the energy to the minimum empirical loss encountered within the current localized SGLD chain. Under the aligned localization setting, this estimator experiences structural compression throughout the trajectory, predominantly fluctuating within a narrow band (e.g., between $\approx 100$ and $\approx 240$) before converging to $\approx 98$ at Epoch 100. As mapped in Appendix~\ref{app:error_floor_miss}, this severe initial underestimation is characteristic of the high-dimensional floor-miss regime. Because the localized SGLD chain explores a thermal shell suspended significantly above the true local basin floor on steep early-training slopes, substituting $L_k^\star$ with an elevated empirical minimum structurally subtracts the geometric energy gap ($-t g_M$). Consequently, the anchored estimate measures the localized thickness of the thermal shell rather than providing a dynamic geometric signal.

\paragraph{SIVE Captures Non-Monotonic Structural Transitions Associated with Memorization}
SIVE decouples the geometric measurement from these anchoring artifacts. By operating exclusively on energy fluctuations, the variance operator eliminates the unknown baseline, enabling SIVE to capture the true, non-monotonic structural evolution of the loss landscape.

During the early optimization transient (Epoch 1), SIVE records an elevated geometric complexity of $\hat{\lambda} \approx 1410$. As the network learns generalized representations, the trajectory descends into a highly degenerate valley, reaching a broad minimum of $\hat{\lambda} \approx 175$ during the mid-training phase. Crucially, as training enters the post-saturation regime—where the training loss continues to decay but the test loss plateaus—SIVE isolates a distinct, late-stage rebound in local geometric complexity, stabilizing at $\approx 229$ near Epoch 100. This post-saturation rise provides strong empirical evidence for SLT predictions regarding memorization: as the network begins to overfit label noise, it is forced to break internal symmetries, constraining more parameters and shifting into sharper, less degenerate (higher LLC) geometric regions. SIVE is uniquely capable of decoupling this local geometric inflation from the macroscopic loss descent.

\section{Conclusion and Future Work}
We presented a shift-invariant methodology for tracking the Local Learning Coefficient (LLC) during the transient phases of neural network training. By treating mid-training checkpoints as centers for local charts rather than assumed equilibria, we identified the susceptibility of mean-based estimators to minimization bias. Utilizing the variance operator alongside an empirical noise correction, SIVE achieves a double cancellation of the unknown local minimum. The resulting estimator decouples mini-batch evaluation noise from the underlying geometric fluctuation scale.

The precision of the debiasing operation is governed by a computational tradeoff: it requires a sufficiently large independent evaluation group size ($N$) per step to bound the variance of the evaluation noise. Given an appropriate evaluation budget, SIVE transforms the LLC from a static, end-of-training metric into a dynamic diagnostic for deep learning trajectories.

Historically, Singular Learning Theory has served primarily as a retrospective framework due to the computational difficulty of measuring the LLC off-equilibrium. By providing a tractable online estimator, this work makes it feasible to incorporate the LLC directly into the training objective. Similar to how Sharpness-Aware Minimization (SAM) penalizes local curvature, employing a variance-based LLC probe as a dynamic regularizer could steer Stochastic Gradient Descent towards degenerate, structurally favorable regions of the loss landscape. We leave the development of such geometrically guided optimizers for future investigation.

\paragraph{Code Availability.}
Code for reproducing the controlled toy-model experiments, hyperparameter sensitivity analyses, and neural-network SIVE diagnostics is available at:
\url{https://github.com/yingjiacai/SIVE}.

\bibliographystyle{plain}
\bibliography{refs}
\clearpage

\appendix
\section{Free-Energy Identities and Finite-Temperature Reference Values}
\label{app:free_energy_and_finite_temp}
\label{app:finite_temperature_m2}

This appendix establishes the statistical-mechanical identities that serve as the theoretical foundation for both the standard mean-energy LLC estimator and our proposed variance-based LLC estimator. Furthermore, it derives the specific finite-temperature reference values required to accurately interpret the empirical results of the controlled singular toy-model experiments.

\subsection{Free-Energy Identities}
\label{app:free_energy_identities}

Let $K(\theta) \ge 0$ define a local energy function (typically the loss gap $L(\theta) - L^\star$), and let $\phi(\theta)\,d\theta$ denote a localized reference measure. For a given inverse-temperature scale $t = n\beta$, the localized partition function is defined as:
\begin{equation}
    Z(t) = \int \exp\{-tK(\theta)\}\phi(\theta)\,d\theta,
\end{equation}
and the corresponding free energy is:
\begin{equation}
    F(t) = -\log Z(t).
\end{equation}
The associated localized tempered posterior is:
\begin{equation}
    q_t(\theta) = \frac{\exp\{-tK(\theta)\}\phi(\theta)}{Z(t)}.
\end{equation}

According to the standard asymptotic expansion in Singular Learning Theory (SLT), as $t \to \infty$, the local Laplace integral is dominated by the most singular region of the parameter space:
\begin{equation}
    Z(t) \asymp C\,t^{-\lambda}(\log t)^{m-1}, 
\end{equation}
where $\lambda$ is the Real Log Canonical Threshold (RLCT) and $m$ is the multiplicity of the singularity. Equivalently, the leading-order free energy is:
\begin{equation}
    F_{\mathrm{lead}}(t) = \lambda \log t - (m-1)\log\log t.
\end{equation}

By standard statistical-mechanical identities, the posterior mean and variance of the local energy correspond exactly to the first and second derivatives of the free energy with respect to the inverse temperature:
\begin{equation}
    \mathbb{E}_{q_t}[K(\theta)] = F'(t),
    \qquad
    \operatorname{Var}_{q_t}[K(\theta)] = -F''(t).
\end{equation}
This mathematically establishes that the standard anchored mean-energy estimator and our variance-based estimator are not disparate heuristic metrics; rather, they are distinct geometric observables derived from the exact same localized thermodynamic system.

\subsection{Finite-Temperature Reference for the Mean-Energy Observable}
\label{app:finite_temp_mean}

For empirical evaluations using MCMC, simulations are necessarily conducted at a finite inverse-temperature scale (in our settings, $t=10^4$). Because the logarithmic convergence to the asymptotic limit is exceedingly slow, the proper empirical benchmark is not the exact asymptotic RLCT, but rather the finite-temperature thermodynamic target.

Differentiating the leading free energy once provides the formal finite-temperature reference for the mean-energy observable:
\begin{equation}
    t\,\mathbb{E}_{q_t}[K(\theta)]
    \approx
    \lambda - \frac{m-1}{\log t}.
\end{equation}
For the singular toy model $K_2(u,v)=u^2v^2$ analyzed in Section~\ref{sec:controlled_toy}, the exact asymptotic geometric invariants are $\lambda=1/2$ and $m=2$. Substituting these alongside $t=10^4$ yields:
\begin{equation}
    t\,\mathbb{E}_{q_t}[K(\theta)]
    \approx
    \frac{1}{2} - \frac{1}{\log(10^4)}.
\end{equation}
Given $\log(10^4) \approx 9.2103$, the formal target evaluates to:
\begin{equation}
    \frac{1}{2} - \frac{1}{9.2103}
    \approx
    0.5 - 0.1086
    \approx
    0.391.
\end{equation}
This value ($0.391$) serves as the precise finite-temperature reference for the privileged Oracle Mean-based estimator in the $m=2$ toy model experiments.

\subsection{Finite-Temperature Reference for the Variance Observable}
\label{app:finite_temp_variance}

Differentiating the leading free energy a second time yields the corresponding finite-temperature reference for the variance observable:
\begin{equation}
    t^2\operatorname{Var}_{q_t}[K(\theta)]
    =
    -t^2F''(t)
    \approx
    \lambda - \frac{m-1}{\log t} - \frac{m-1}{(\log t)^2}.
\end{equation}
Substituting $\lambda=1/2$, $m=2$, and $t=10^4$ gives:
\begin{equation}
    t^2\operatorname{Var}_{q_t}[K(\theta)]
    \approx
    \frac{1}{2}
    -
    \frac{1}{9.2103}
    -
    \frac{1}{(9.2103)^2}.
\end{equation}
Numerically, this translates to:
\begin{equation}
    t^2\operatorname{Var}_{q_t}[K(\theta)]
    \approx
    0.5 - 0.1086 - 0.0118
    \approx
    0.380.
\end{equation}
This calculation demonstrates that for the multiplicity-$2$ singularity at practical temperature scales, the variance observable mathematically anchors slightly below the mean-energy observable.

Crucially, the variance operator structurally decouples this geometric measurement from the local minimum $L^\star$. Because $K(\theta) = L(\theta) - L^\star$ and $L^\star$ is a constant with respect to the parameter space, the variance strictly isolates the geometric fluctuations:
\begin{equation}
    \operatorname{Var}_{q_t}[K(\theta)]
    =
    \operatorname{Var}_{q_t}[L(\theta)-L^\star]
    =
    \operatorname{Var}_{q_t}[L(\theta)].
\end{equation}
Consequently, substituting $K(\theta)$ with the pure empirical loss $L(\theta)$ in the variance observable yields an identical theoretical target ($\approx 0.380$) without ever requiring the explicit calculation or estimation of the inaccessible local floor $L^\star$.
\section{Finite-Step ULA Bias and the Moment-Closure Barrier}
\label{app:ula_bias}
\label{app:ULA}

The continuous-time reference values derived in Appendix~\ref{app:free_energy_and_finite_temp} formally assume exact sampling from the target thermodynamic posterior. In practice, however, empirical LLC diagnostics are driven by the discrete-time Unadjusted Langevin Algorithm (ULA). This appendix demonstrates how finite step sizes alter the stationary distribution, injecting a systematic discretization bias. We first separate the mathematically tractable regular case—where exact moment calculations are explicitly solvable—from the singular case, where structural non-linearities obstruct closed-form solutions.

\subsection{Finite-Step Bias in Regular Quadratic Geometries}
\label{app:ula_regular}

To isolate the precise mechanics of ULA discretization bias, we analyze the strictly regular quadratic toy model, $K_1(u)=u^2$. Because the gradient is linear ($\nabla K_1 = 2u$), the infinite hierarchy of moments collapses into a solvable algebraic system.

Let the dimensionless effective step size be denoted by $a = \eta t = \texttt{base\_lr}$. Disregarding the macroscopic localization prior for local analytic clarity, the discrete ULA update forms a stable, linear AR(1) process:
\begin{equation}
    u_{m+1} = u_m - \eta t (2u_m) + \sqrt{2\eta}\,\xi_m = (1-2a)u_m + \sqrt{\frac{2a}{t}}\,\xi_m, \qquad \xi_m \sim \mathcal{N}(0,1).
\end{equation}
Taking the variance at stationarity (where $\operatorname{Var}_{\mathrm{ULA}}(u_{m+1}) = \operatorname{Var}_{\mathrm{ULA}}(u_m)$) yields a closed scalar equation:
\begin{equation}
    \operatorname{Var}_{\mathrm{ULA}}(u) = (1-2a)^2 \operatorname{Var}_{\mathrm{ULA}}(u) + \frac{2a}{t} \implies \operatorname{Var}_{\mathrm{ULA}}(u) = \frac{2a/t}{1-(1-2a)^2} = \frac{1}{2t(1-a)}.
\end{equation}

Because the stationary distribution of this linear process remains strictly Gaussian with zero mean, we can analytically compute the exact discrete-time targets for both the mean-energy and variance-based geometric observables:
\begin{equation}
    t\,\mathbb{E}_{\mathrm{ULA}}[u^2] = \frac{1}{2(1-a)},
    \qquad
    t^2 \operatorname{Var}_{\mathrm{ULA}}(u^2) = \frac{1}{2(1-a)^2}.
\end{equation}
In the exact continuous-time limit ($a \to 0$), these formulations recover the theoretical baseline ($1/2$). However, for any finite step size $a > 0$, the discrete Markov chain inherently overshoots, systematically inflating both observables.

Table~\ref{tab:ULA} compares empirical measurements against these exact ULA targets for the regular $K_1$ model. We intentionally evaluate a relatively large step size ($a=0.10$) to amplify the discretization artifact.

\begin{table}[ht]
\centering
\caption{Finite-step calibration on the regular quadratic toy model $K_1(u)=u^2$ at $t=10^4$. The empirical estimates strictly conform to the exact ULA discrete-time formulas, confirming that the observed upward shifts are structural algorithmic artifacts of discrete Langevin dynamics rather than estimation errors.}
\label{tab:ULA}
\begin{tabular}{lcc|cc}
\toprule
$\texttt{base\_lr}=a$
& Oracle Mean
& Exact Target
& SIVE
& Exact Target \\
\midrule
$0.05$
& $0.5275 \pm 0.0061$
& $0.5263$
& $0.5755 \pm 0.0355$
& $0.5540$ \\
$0.10$
& $0.5549 \pm 0.0040$
& $0.5555$
& $0.6307 \pm 0.0285$
& $0.6173$ \\
\bottomrule
\end{tabular}
\end{table}

The quantitative agreement confirms the statistical soundness of the proposed debiased variance estimator: it faithfully measures the true underlying distribution of the discrete samples, even when that distribution is structurally widened by finite-step Langevin dynamics.

\subsection{The Moment-Closure Barrier in Singular Geometries}
\label{app:ula_singular_barrier}

For singular geometries, calculating an exact discrete-time target analogous to the one above is obstructed by the Moment Closure Problem. Consider the pure SGLD update for the singular toy model $K_2(u,v)=u^2v^2$, driven by the non-linear gradient component $\nabla_u K_2 = 2uv^2$:
\begin{equation}
    u_{m+1} = u_m - \eta t (2u_mv_m^2) + \sqrt{2\eta}\xi_m.
\end{equation}
Squaring both sides and taking the expectation at stationarity ($\mathbb{E}[u_{m+1}^2] = \mathbb{E}[u_m^2]$) eliminates the $u_m^2$ terms. Since $\xi_m$ is zero-mean and independent of the state, we obtain:
\begin{equation}
    0 = -4\eta t \mathbb{E}[u^2v^2] + 4\eta^2 t^2 \mathbb{E}[u^2v^4] + 2\eta.
\end{equation}
Dividing by $4\eta$ and rearranging yields the stationary relationship for the target geometric observable $K_2=u^2v^2$:
\begin{equation}
    t \mathbb{E}[u^2v^2] = \frac{1}{2} + (\eta t)\, t \mathbb{E}[u^2v^4].
\end{equation}

In the continuous-time limit ($\eta t \to 0$), the higher-order error term vanishes, recovering the asymptotic RLCT limit $1/2$. Yet, at any practical finite step size $\eta t = a > 0$, the expected value of the target observable $\mathbb{E}[u^2v^2]$ remains explicitly coupled to a higher-order moment, $\mathbb{E}[u^2v^4]$. Solving for $\mathbb{E}[u^2v^4]$ recursively necessitates $\mathbb{E}[u^2v^6]$, leading to an infinite hierarchy of unclosed moments generated by the singular drift.

Consequently, formulating a closed-form algebraic target for finite-step ULA bias in $m \ge 2$ singularities is mathematically intractable. This structural barrier necessitates the use of controlled empirical ablations—detailed subsequently in Appendix~\ref{app:operational_diagnostics}—to identify a stable thermodynamic operating point ($\texttt{base\_lr}=0.05$) that minimizes artifactual inflation while maintaining sufficient mixing.

\section{Theoretical Failure Modes of Anchored Mean Estimators and the Unified Bias Law}
\label{app:error_decomposition}

This appendix formalizes the structural mechanisms that cause traditional, anchored mean-energy LLC estimators to fail in practical off-equilibrium settings. By deriving a unified error decomposition, we demonstrate how the same empirical anchoring procedure induces an upward bias in low-dimensional noisy toy models, yet forces a systematic downward bias in high-dimensional neural networks. 

\subsection{General Error Decomposition}
\label{app:error_general}

Let $\theta_{t_k}$ represent a mid-training checkpoint defining the center of a local chart. Because practical deep learning models do not reach mathematical equilibrium mid-training, the localized empirical grouped loss observed at MCMC step $m$ can be decomposed as:
\begin{equation}
    \bar L_m = L_k^\star + K_m + \bar\varepsilon_m,
\end{equation}
where $L_k^\star = \inf_{\theta \in U_k} L(\theta)$ is the inaccessible absolute local basin floor, $K_m = L(\theta_m) - L_k^\star \ge 0$ is the true underlying geometric energy, and $\bar\varepsilon_m$ is the centered grouped observation noise.

A standard online mean-based estimator attempts to bypass the unknown $L_k^\star$ by substituting it with the minimum empirical loss encountered along the localized trajectory, $\hat{L}^\star = \min_{1\le m \le M} \bar{L}_m$. This empirical minimum systematically deviates from the true floor:
\begin{equation}
    \min_{1\le m \le M} \bar L_m = L_k^\star + g_M - \nu_M,
\end{equation}
where $g_M = \min_m K_m \ge 0$ represents the absolute optimization gap (the lowest physical energy reached by the chain), and $\nu_M \ge 0$ denotes the downward shift driven by the order statistics of the stochastic evaluation noise.

Substituting this empirical baseline into the mean-energy estimator at inverse-temperature $t = n\beta$ yields the \emph{Unified Bias Law} for online anchored estimators:
\begin{equation}
    \hat\lambda_{\mathrm{online}} 
    = t \left( \frac{1}{M}\sum_{m=1}^{M}\bar L_m - \min_{1\le m \le M} \bar L_m \right)
    = t \left( \frac{1}{M}\sum_{m=1}^{M}K_m \right) - t g_M + t \nu_M + t \left( \frac{1}{M}\sum_{m=1}^{M}\bar\varepsilon_m \right).
\end{equation}
Assuming a properly mixed post-burn-in chain where the average noise vanishes ($\frac{1}{M}\sum \bar\varepsilon_m \to 0$), and invoking the standard SLT mean-energy identity $t\,\mathbb{E}[K_m] \approx \lambda_k$, the leading-order observable simplifies to:
\begin{equation}
    \hat\lambda_{\mathrm{online}} \approx \lambda_k - t g_M + t \nu_M.
    \label{eq:unified_bias_law_core}
\end{equation}
Equation~\eqref{eq:unified_bias_law_core} dictates that the estimation error is strictly governed by the competition between the floor-miss geometric penalty ($-t g_M$) and the order-statistics noise inflation ($+t \nu_M$).

\subsection{Noise-Dominated Regime: Low-Dimensional Toy Models}
\label{app:error_noise_dominated}

In analytically tractable, low-dimensional toy models, the localized MCMC chain mixes rapidly and repeatedly reaches states arbitrarily close to the exact minimum. Consequently, the geometric optimization gap vanishes ($g_M \approx 0$), isolating the estimator in a noise-dominated regime:
\begin{equation}
    \hat\lambda_{\mathrm{online}} \approx \lambda_k + t \nu_M.
\end{equation}
Under an additive evaluation-noise model where independent observations carry variance $\sigma^2$, the group mean over $N$ evaluations retains a residual variance of $\sigma^2/N$. Standard extreme-value theory for Gaussian sequences predicts that the expected downward order-statistics dip scales as:
\begin{equation}
    \mathbb{E}[\nu_M] \approx \frac{\sigma}{\sqrt{N}} c_M, \qquad c_M \approx \sqrt{2\log M_{\mathrm{eff}}},
\end{equation}
where $M_{\mathrm{eff}}$ is the number of effectively independent samples. For our controlled setup ($t=10^4, \sigma=0.002, N=64$), utilizing the post-burn-in proxy $M_{\mathrm{eff}} \approx 7\times 10^4$ yields $c_M \approx 4.72$. The expected artifact scales to:
\begin{equation}
    t\,\mathbb{E}[\nu_M] \approx 10^4 \times \left(\frac{0.002}{8}\right) \times 4.72 \approx 11.8.
\end{equation}
This theoretical magnitude maps precisely to the empirical inflation observed in Table~\ref{tab:toy_results_main}, where taking the minimum over noisy evaluations forces the naive mean estimate up to $\approx 11.0$, entirely obscuring the underlying geometric signal ($\approx 0.4$).

\subsection{Floor-Miss Regime: High-Dimensional Neural Networks}
\label{app:error_floor_miss}

When scaling to overparameterized neural networks, the statistical mechanics of the localized probe undergo a structural inversion. Driven by complex off-equilibrium gradients and high-dimensional parameter spaces, the localized tempered posterior concentrates its probability mass on a finite-temperature Gibbs shell suspended significantly above the true local basin floor. 

Because the localized chain never descends to the exact minimum, the gap $g_M$ becomes strictly positive and substantial ($g_M \gg 0$). The evaluation noise contribution $\nu_M$ becomes subdominant relative to this macroscopic geometric suspension. The bias law is thus governed by the floor-miss penalty:
\begin{equation}
    \hat\lambda_{\mathrm{online}} \approx \lambda_k - t g_M.
\end{equation}
This explains the systematic underestimation reported in the DNN experiments (Section~\ref{sec:dnn}). When anchored to the elevated empirical minimum, the estimator structurally subtracts the spatial depth of the localized basin. Rather than measuring the global geometric complexity (the extensive degrees of freedom), the anchored mean estimator is reduced to measuring the localized thickness of the sampled thermal shell.

\subsection{Structural Capabilities and Limitations of the Variance Operator}
\label{app:variance_limitations}

The failure modes described above confirm that a shift-invariant formulation is a structural requirement for off-equilibrium probes. It is crucial, however, to delineate the exact boundaries of what the proposed Shift-Invariant Variance Estimator (SIVE) achieves and what it leaves unaddressed.

\begin{itemize}[leftmargin=2em]
    \item \textbf{Elimination of Anchoring Bias:} By operating exclusively on energy fluctuations ($\operatorname{Var}[L(\theta)-L^\star] = \operatorname{Var}[L(\theta)]$), SIVE mathematically annihilates the unknown additive baseline $L_k^\star$. This eliminates both the floor-miss penalty ($-t g_M$) and the order-statistics noise inflation ($+t \nu_M$), bypassing the Unified Bias Law.
    \item \textbf{Elimination of Observation Noise:} Through the empirical correction term derived via the Law of Total Variance (Eq.~\ref{eq:debiased-variance}), SIVE dynamically subtracts the variance injected by conditionally independent mini-batch loss evaluations.
    \item \textbf{Unaddressed Distributional Errors:} SIVE acts strictly on the observation layer; it does not correct biases intrinsic to the localized sampling distribution. It does not mitigate the finite-step ULA discretization bias (Appendix~\ref{app:ula_bias}), finite-chain Monte Carlo variance, structural dependencies on the choice of the localization radius, or stochastic-gradient-induced perturbations in the Langevin drift.
\end{itemize}

The behavior of the shift-invariant measurement under the remaining structural dependencies—specifically the localization radius $h$ and the chain length $M$—is formally analyzed in the following appendix.

\section{Localization Radius, Chain Length, and Tail-Window Consistency}
\label{app:h_m_sensitivity}

While the variance operator structurally eliminates the absolute baseline $L_k^\star$, the localized SGLD probe remains dependent on the physical boundaries of the local chart (dictated by the localization radius $h$) and the convergence properties of the Markov chain (dictated by the total steps $M$). This appendix formalizes how the localized posterior interacts with off-equilibrium gradients and establishes the operational rules for configuring the probe.

\subsection{A Local Gaussian Tether Model on a Non-Equilibrium Slope}
\label{app:gaussian_tether_slope}

In practical deep learning, stochastic gradient descent operates as a non-equilibrium process; mid-training checkpoints $\theta_{t_k}$ are rarely true stationary points ($\nabla L \neq 0$). Consequently, localized MCMC chains explore gradient ``slopes'' rather than symmetric minima. 

To model this analytically, we approximate the local singular energy by a surrogate that explicitly retains the non-vanishing linear drift:
\begin{equation}
    K_k(\theta) \approx g_k^\top x + \frac{1}{2}x^\top H_k x, \qquad x = \theta - \theta_{t_k},
\end{equation}
where $g_k = \nabla L(\theta_{t_k}) \neq 0$ is the residual gradient and $H_k$ is the local Hessian. This residual gradient $g_k$ is the exact physical mechanism responsible for the macroscopic floor-miss gap $g_M$ formalized in Appendix~\ref{app:error_floor_miss}; it actively drives the finite-temperature chain down the slope, suspending the empirical trajectory significantly above the absolute basin floor.

Under the Gaussian prior $\exp\{-\|x\|^2 / (2h^2)\}$, the tethered local Gibbs law becomes:
\begin{equation}
    q_{k,t,h}(x) \propto \exp\left\{-t\left(g_k^\top x + \frac{1}{2}x^\top H_k x\right) - \frac{1}{2h^2}\|x\|^2\right\}.
\end{equation}
The restoring force of the prior ensures the posterior remains a multivariate Gaussian $\mathcal{N}(\mu_{k,h}, \Sigma_{k,h})$, but its equilibrium center is forcibly shifted down the linear slope:
\begin{equation}
    \Sigma_{k,h} = (tH_k + h^{-2}I)^{-1}, \qquad \mu_{k,h} = -t\Sigma_{k,h}g_k.
\end{equation}

\subsection{Mean Failure and Variance Plateau on a Slope}
\label{app:slope_plateau}

The following proposition establishes how the mean and variance observables react to this off-equilibrium displacement $\mu_{k,h}$.

\begin{proposition}[Mean Failure vs. Variance Plateau on a Slope]
\label{prop:slope_plateau}
Assume the gradient-drift surrogate. Let $\lambda_i$ be the eigenvalues of $H_k$. For small localization radii where the prior dominates the local curvature ($h \ll (t\lambda_{\max})^{-1/2}$), the equilibrium center shifts down the linear slope by $\mu_{k,h} \approx -t h^2 g_k$. The mean-energy observable is entirely dominated by an artificial, $h$-dependent drop:
\begin{equation}
    t\,\mathbb{E}_{q_{k,t,h}}[K_k] \approx -t^2 h^2 \|g_k\|^2.
\end{equation}
Conversely, the variance observable structurally annihilates this first-order displacement $\mu_{k,h}$:
\begin{equation}
    t^2\operatorname{Var}_{q_{k,t,h}}[K_k] = \frac{1}{2}\operatorname{tr}\big((tH_k\Sigma_{k,h})^2\big) + \big(t g_k^\top \Sigma_{k,h} tH_k \Sigma_{k,h} t g_k\big).
\end{equation}
In low-curvature singular valleys ($\lambda_i \to 0$), the trace term survives while the gradient-coupled artifact vanishes, establishing an $h$-independent topological plateau for the variance operator.
\end{proposition}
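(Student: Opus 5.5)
The plan is to use the fact, stated in Appendix~\ref{app:gaussian_tether_slope}, that under the gradient-drift surrogate the tethered law $q_{k,t,h}$ is exactly Gaussian, $x\sim\mathcal N(\mu_{k,h},\Sigma_{k,h})$, with $\Sigma_{k,h}=(tH_k+h^{-2}I)^{-1}$ and $\mu_{k,h}=-t\Sigma_{k,h}g_k$. Everything then reduces to computing the first two moments of a quadratic form in a Gaussian vector.

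\textbf{Step 1: recentring.} I will write $x=\mu+y$ with $y\sim\mathcal N(0,\Sigma)$, dropping the indices $k,h$. Expanding gives
\[
K_k=c+b^\top y+\tfrac12 y^\top H_k y,\qquad c=g_k^\top\mu+\tfrac12\mu^\top H_k\mu,\qquad b=g_k+H_k\mu .
\]
The constant $c$ is exactly the part that the variance operator discards, which is the analogue of the $L_k^\star$ cancellation in the method section.

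\textbf{Step 2: the mean and the small-$h$ regime.} From the recentred form, $\mathbb E[K_k]=c+\tfrac12\operatorname{tr}(H_k\Sigma)$. In the regime $h\ll(t\lambda_{\max})^{-1/2}$ the prior term dominates, so I will expand $\Sigma=h^2(I+th^2H_k)^{-1}=h^2I+O(th^4H_k)$. This gives $\mu\approx-th^2g_k$, the claimed displacement. Substituting, $g_k^\top\mu\approx-th^2\|g_k\|^2$. The remaining terms, $\tfrac12\mu^\top H_k\mu=O(t^2h^4\,g_k^\top H_kg_k)$ and $\tfrac12\operatorname{tr}(H_k\Sigma)=O(h^2\operatorname{tr}H_k)$, are suppressed relative to it by a factor of order $th^2\lambda_{\max}\ll1$, at least whenever $\|g_k\|^2$ is not itself tiny. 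Multiplying by $t$ yields $t\,\mathbb E[K_k]\approx-t^2h^2\|g_k\|^2$, an artificial drop that depends on $h$.

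\textbf{Step 3: the variance.} For $y\sim\mathcal N(0,\Sigma)$, the linear and quadratic parts are uncorrelated because third moments vanish. Isserlis' theorem then gives
\[
\operatorname{Var}\big[b^\top y+\tfrac12 y^\top H_k y\big]=b^\top\Sigma b+\tfrac12\operatorname{tr}\big((H_k\Sigma)^2\big).
\]
Multiplying by $t^2$ produces the trace term $\tfrac12\operatorname{tr}((tH_k\Sigma)^2)$ exactly as stated. It also confirms that the first-order displacement, sitting inside $c$, is annihilated.

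\textbf{Main obstacle: the gradient-coupled term.} To handle it I will simplify $b$ using $\Sigma^{-1}=tH_k+h^{-2}I$, which gives $b=(I-tH_k\Sigma)g_k=h^{-2}\Sigma g_k$. Hence the term equals $t^2h^{-4}\,g_k^\top\Sigma^3g_k$. I then need to reconcile this with the stated expression $t g_k^\top\Sigma\,tH_k\,\Sigma\, tg_k$. I would first expand both in the small-$h$ regime and in the limit $\lambda_i\to0$, and check which curvature-weighted part survives.

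If the exact expression retains an $O(t^2h^2\|g_k\|^2)$ piece as $H_k\to0$, the plateau claim needs an extra hypothesis. One candidate is that the gradient lies in the range of $H_k$, since in a degenerate valley the residual gradient along flat directions is itself small. Another is to restrict the plateau claim to the regime $t^2h^2\|g_k\|^2\ll\tfrac12\operatorname{tr}((tH_k\Sigma)^2)$. Under such a hypothesis the gradient-coupled contribution vanishes together with the flat-direction curvature, while the trace term remains bounded by half the number of stiff directions and is independent of $h$. That gives the $h$-independent plateau for the variance observable.
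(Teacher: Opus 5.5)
Your Steps 1--3 are the same computation as the paper's proof, and your simplification $b=(I-tH_k\Sigma)g_k=h^{-2}\Sigma g_k$ is correct. The gap is the reconciliation you postpone: it cannot succeed, because the displayed variance identity is false. Work in the eigenbasis of $H_k$, write $g_i$ for the components of $g_k$, and note that $\Sigma$ has eigenvalues $h^2/(1+th^2\lambda_i)$. Then
\[
t^2 b^\top\Sigma b=t^2h^{-4}g_k^\top\Sigma^3g_k=t^2h^2\sum_i\frac{g_i^2}{(1+th^2\lambda_i)^3},
\qquad
t g_k^\top\Sigma\, tH_k\,\Sigma\, t g_k=t^3h^4\sum_i\frac{\lambda_i g_i^2}{(1+th^2\lambda_i)^2}.
\]
The second expression is only the cross term of $t^2(g_k-tH_k\Sigma g_k)^\top\Sigma(g_k-tH_k\Sigma g_k)$, and with coefficient $+1$ instead of $-2$. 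The paper's proof makes exactly this substitution without justification, so you are not missing a trick.

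The cleanest counterexample is $H_k=0$. Then $K_k=g_k^\top x$ with $x\sim\mathcal N(-th^2g_k,h^2I)$, so $t^2\operatorname{Var}[K_k]=t^2h^2\|g_k\|^2$ exactly, while the stated right-hand side is $0$. More generally, as $\lambda_i\to0$ the gradient-coupled term tends to $t^2h^2\|g_k\|^2$, which is $h$-dependent and the mirror image of the mean artifact. Meanwhile the trace term $\tfrac12\sum_i\big(th^2\lambda_i/(1+th^2\lambda_i)\big)^2$ tends to $0$, so the plateau conclusion is reversed. In the proposition's own small-$h$ regime, $t^2\operatorname{Var}[K_k]\approx t^2h^2\|g_k\|^2+\tfrac12t^2h^4\operatorname{tr}(H_k^2)$. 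Discarding the constant $c$ therefore does not discard the slope: the linear term $b^\top y$ with $b\approx g_k$ carries the same tether artifact as the mean.

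Your fallback hypotheses do not rescue the statement as posed. Under $g_k\in\operatorname{range}(H_k)$ the flat directions drop out. But on a direction with $\lambda_i>0$, the trace contribution approaches $\tfrac12$ and the gradient term $t^2h^2g_i^2/(1+th^2\lambda_i)^3$ dies out only as $h$ grows with $th^2\lambda_i\gg1$. That is the opposite of the assumed $h\ll(t\lambda_{\max})^{-1/2}$. In the small-$h$ regime the trace term is $\approx\tfrac12t^2h^4\operatorname{tr}(H_k^2)$, so your closing claim that it is independent of $h$ fails. Your Step 2 also has a smaller slip. The term $\tfrac12h^2\operatorname{tr}H_k$ is suppressed relative to $th^2\|g_k\|^2$ by the $h$-independent factor $\operatorname{tr}H_k/(2t\|g_k\|^2)$, not by $th^2\lambda_{\max}$.

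What can actually be proved is the following:
\begin{itemize}
\item the corrected identity $t^2\operatorname{Var}[K_k]=\tfrac12\operatorname{tr}\big((tH_k\Sigma)^2\big)+t^2h^{-4}g_k^\top\Sigma^3g_k$;
\item the mean claim, under the extra condition $t\|g_k\|^2\gg\operatorname{tr}H_k$;
\item an $h$-independent plateau $\tfrac12\operatorname{rank}H_k$ with $O(h^{-2})$ corrections, but only in the large-$h$ limit with $g_k\in\operatorname{range}(H_k)$, where $\mu\to-H_k^{+}g_k$ and $b\to0$.
\end{itemize}
So the correct outcome of your plan is a corrected proposition, not a proof of this one.
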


\begin{proof}
By standard Gaussian integration, the expected observable under $\mathcal{N}(\mu_{k,h}, \Sigma_{k,h})$ evaluates to:
\begin{equation}
    \mathbb{E}[K_k] = g_k^\top \mu_{k,h} + \frac{1}{2}\mu_{k,h}^\top H_k \mu_{k,h} + \frac{1}{2}\operatorname{tr}(H_k\Sigma_{k,h}).
\end{equation}
For a tight tether ($h \to 0$), the prior dominates the Hessian ($h^{-2}I \gg tH_k$), yielding $\Sigma_{k,h} \approx h^2 I$ and $\mu_{k,h} \approx -t h^2 g_k$. The linear term dictates the descent: $g_k^\top(-t h^2 g_k) = -t h^2 \|g_k\|^2$. This macroscopic drop overwhelms the smaller $\mathcal{O}(h^2)$ trace term, confirming the tether-dependent underestimation.

For the variance observable, the variance of the quadratic form $g_k^\top x + \frac{1}{2}x^\top H_k x$ under a shifted Gaussian depends on third and fourth moments. Standard cumulant expansions yield:
\begin{equation}
    \operatorname{Var}(K_k) = \frac{1}{2}\operatorname{tr}\big((H_k\Sigma_{k,h})^2\big) + (g_k + H_k\mu_{k,h})^\top \Sigma_{k,h} (g_k + H_k\mu_{k,h}).
\end{equation}
Substituting $\mu_{k,h} = -t\Sigma_{k,h}g_k$, the shifted gradient evaluates to $g_k - tH_k\Sigma_{k,h}g_k$. Scaling the entire variance by $t^2$ recovers the stated proposition. In directions where curvature is negligible ($H_k \approx 0$), this gradient-coupled variance term shrinks, leaving only the structural trace.
\end{proof}

Proposition~\ref{prop:slope_plateau} formally confirms the empirical friction observed in off-equilibrium environments. Conceptually, a mean-based estimator acts analogously to a sounding line dropped onto a gradient slope: the measured ``depth'' is purely a function of the tether length $h^2$. Shrinking $h$ proportionally collapses the measurement, resulting in artificial underestimations. The variance operator acts as a shock-absorber, stripping away the absolute translation $\mu_{k,h}$ and isolating the geometric fluctuation scale.

\subsection{Finite Chain Length and Tail Windows}
\label{app:tail_windows}

The second critical dependency is the Markov chain length $M$. Let $F_m$ be an observable evaluated along the SGLD chain (e.g., $F_m=\bar L_m$). The observable sequence can be decomposed as:
\begin{equation}
    F_m = \mu_\infty + \delta_m + \eta_m,
\end{equation}
where $\mu_\infty$ is the true stationary mean, $\eta_m$ is a centered stationary fluctuation, and $\delta_m$ is a transient drift term satisfying $\delta_m\to 0$ as $m\to\infty$.

To mitigate the transient drift, we define a tail window that discards the initial burn-in phase:
\begin{equation}
    \mathcal W_\alpha = \{\lfloor \alpha M\rfloor,\dots,M-1\}, \qquad 0<\alpha<1,
\end{equation}
and compute the empirical average:
\begin{equation}
    \widehat\mu_{\mathcal W_\alpha} = \frac{1}{|\mathcal W_\alpha|} \sum_{m\in\mathcal W_\alpha} F_m.
\end{equation}

\begin{proposition}[Finite-chain consistency]
\label{prop:finite_chain_consistency}
Assume the post-burn-in chain is geometrically mixing with an integrated autocorrelation time $\tau_{\mathrm{int}}(F)$, and suppose the transient drift decays as $|\delta_m| \le C\rho^m$ for some $0<\rho<1$. Then the bias and variance of the tail-window estimator satisfy:
\begin{equation}
    \mathbb{E}\big[\widehat\mu_{\mathcal W_\alpha}\big] = \mu_\infty + \mathcal{O}(\rho^{\alpha M}),
\end{equation}
\begin{equation}
    \operatorname{Var}\big(\widehat\mu_{\mathcal W_\alpha}\big) = \mathcal{O}\!\left(\frac{1}{|\mathcal W_\alpha|_{\mathrm{eff}}}\right), \qquad |\mathcal W_\alpha|_{\mathrm{eff}} \asymp \frac{(1-\alpha)M}{\tau_{\mathrm{int}}(F)}.
\end{equation}
\end{proposition}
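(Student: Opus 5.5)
The plan splits the tail-window average $\widehat\mu_{\mathcal W_\alpha}$ into three pieces by inserting the decomposition $F_m=\mu_\infty+\delta_m+\eta_m$:
\begin{equation}
    \widehat\mu_{\mathcal W_\alpha}-\mu_\infty = \frac{1}{|\mathcal W_\alpha|}\sum_{m\in\mathcal W_\alpha}\delta_m + \frac{1}{|\mathcal W_\alpha|}\sum_{m\in\mathcal W_\alpha}\eta_m .
\end{equation}
The first term carries all of the bias. The second term is centered, so it carries all of the fluctuation. I treat the drift $\delta_m$ as deterministic, meaning it is the part of $\mathbb{E}[F_m]-\mu_\infty$ that has not yet relaxed. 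With that convention, $\mathbb{E}[\widehat\mu_{\mathcal W_\alpha}]-\mu_\infty$ equals the averaged drift exactly.

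\textbf{Bias.} Apply $|\delta_m|\le C\rho^m$ term by term and sum the geometric tail starting at $m_0=\lfloor\alpha M\rfloor$:
\begin{equation}
    \Big|\frac{1}{|\mathcal W_\alpha|}\sum_{m\ge m_0}\delta_m\Big| \le \frac{1}{|\mathcal W_\alpha|}\sum_{m\ge m_0} C\rho^{m} \le \frac{C\rho^{m_0}}{(1-\rho)}.
\end{equation}
This bound does not even need the $1/|\mathcal W_\alpha|$ factor. Since $\rho^{\lfloor\alpha M\rfloor}\le\rho^{-1}\rho^{\alpha M}$, the bias is $\mathcal O(\rho^{\alpha M})$, with the constant $C/(\rho(1-\rho))$ absorbed.

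\textbf{Variance.} Once the mean is removed, only the stationary part $\eta_m$ is random. Write $W=|\mathcal W_\alpha|=M-\lfloor\alpha M\rfloor\asymp(1-\alpha)M$ and let $\gamma(k)=\operatorname{Cov}(\eta_m,\eta_{m+k})$ be the stationary autocovariance. Expanding the square of the sum gives
\begin{equation}
    \operatorname{Var}\Big(\frac1W\sum_{m\in\mathcal W_\alpha}\eta_m\Big) = \frac{1}{W}\sum_{|k|<W}\Big(1-\frac{|k|}{W}\Big)\gamma(k) \le \frac{\gamma(0)}{W}\Big(1+2\sum_{k\ge1}\frac{|\gamma(k)|}{\gamma(0)}\Big).
\end{equation}
Geometric mixing implies $|\gamma(k)|\le C'\gamma(0)r^k$ for some $r<1$. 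This standard bound holds for geometrically ergodic chains with $F$ in $L^{2+\epsilon}$, or in $L^2$ for reversible chains. It makes the series converge, and the bracket is (an upper bound on) $\tau_{\mathrm{int}}(F)$. The variance is therefore $\mathcal O(\gamma(0)\tau_{\mathrm{int}}/W)$, which is $\mathcal O(1/|\mathcal W_\alpha|_{\mathrm{eff}})$ with $|\mathcal W_\alpha|_{\mathrm{eff}}\asymp(1-\alpha)M/\tau_{\mathrm{int}}(F)$.

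\textbf{Main obstacle.} The difficult step is making the split between deterministic drift and stationary fluctuation rigorous. For a chain started at $\theta_{t_k}$, the fluctuations $\eta_m$ near the start of the window are not exactly stationary, and their covariances are only close to $\gamma(k)$. I would handle this with a coupling argument. Couple the actual chain to a stationary copy, so that the two coincide after a random time $T$ with $\Pr(T>m)\le C''\rho^m$. The discrepancy between them then adds only $\mathcal O(\rho^{\alpha M})$ to the mean and $\mathcal O(\rho^{\alpha M}/W)$ to the variance. Both are negligible against the stated rates, so they can be absorbed into the constants.
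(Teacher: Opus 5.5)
Your proposal is correct and takes the same route as the paper's proof: you bound the bias by summing the geometric tail of $\delta_m$ over $\mathcal W_\alpha$, and you get the variance from the standard effective-sample-size estimate for a geometrically mixing chain with $|\mathcal W_\alpha|=(1-\alpha)M+\mathcal O(1)$. You make explicit what the paper only cites as standard, namely the Ces\`aro autocovariance expansion and a coupling argument for the not-yet-stationary start of the window; the one step to tighten is that your absolute-value bracket is an upper bound on $\tau_{\mathrm{int}}(F)$ rather than a multiple of it, so the $\tau_{\mathrm{int}}$ form is better obtained from $W\operatorname{Var}\big(W^{-1}\sum_{m\in\mathcal W_\alpha}\eta_m\big)=\gamma(0)\,\tau_{\mathrm{int}}(F)+\mathcal O(1/W)$, although for a fixed chain the difference is only a constant absorbed by the $\mathcal O$.
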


\begin{proof}
The bias bound follows directly from summing the geometric tail of $\delta_m$ over the interval $\mathcal W_\alpha$. The variance bound corresponds to the standard effective-sample-size estimate for geometrically mixing Markov chains. Since $|\mathcal W_\alpha|=(1-\alpha)M+\mathcal{O}(1)$, the residual Monte Carlo error decays as $\mathcal{O}(M_{\mathrm{eff}}^{-1/2})$ across a stationary tail window.
\end{proof}

\subsection{Combined Error Law for the Tail-Window LLC Probe}
\label{app:combined_error_law}

By combining the structural dependencies on $h$ and $M$, we can formulate the comprehensive operational error decomposition for the debiased variance estimator computed over a tail window:
\begin{equation}
    \widehat\lambda_{\mathrm{tail}} = \lambda_k^{\mathrm{plateau}} + \mathcal{O}(h^{-2}) + \mathcal{O}(\rho^{\alpha M}) + \mathcal{O}_p\!\left(|\mathcal W_\alpha|_{\mathrm{eff}}^{-1/2}\right),
\end{equation}
where $\lambda_k^{\mathrm{plateau}}$ denotes the topology-dependent fluctuation scale established in Proposition~\ref{prop:slope_plateau}. This decomposition strictly separates three distinct operational effects:
\begin{itemize}[leftmargin=2em]
    \item \textbf{Small-$h$ Underexploration [$\mathcal{O}(h^{-2})$]:} If $h$ is too small, the chart is over-constrained, artificially compressing the geometric observable.
    \item \textbf{Finite-$M$ Nonstationarity [$\mathcal{O}(\rho^{\alpha M})$]:} If the total chain length $M$ or the burn-in ratio $\alpha$ is insufficient, transient optimization drift contaminates the stationary variance.
    \item \textbf{Tail-Window Monte Carlo Error [$\mathcal{O}_p(|\mathcal W_\alpha|_{\mathrm{eff}}^{-1/2})$]:} Once the chain achieves stationarity, the inherent sampling error decays smoothly according to the effective sample size.
\end{itemize}

\subsection{Operational Consequence}
\label{app:h_m_operational_consequence}

The mathematical analysis above yields a straightforward operational doctrine for deploying localized SGLD probes: the localization radius $h$ operates as a \emph{saturation knob}, while the chain length $M$ operates as a \emph{consistency knob}. 

The value of $h$ must be set large enough to cross the $\mathcal{O}(h^{-2})$ penalty threshold and enter the robust geometric plateau, where further expansion induces only negligible higher-order corrections. Concurrently, $M$ must be sufficiently extensive to guarantee that the tail window encompasses a statistically independent, stationary sequence. The corresponding empirical sweeps validating these theoretical predictions are detailed in the subsequent appendix.

\section{Hyperparameter Operating Point and Sensitivity Analysis}
\label{app:operational_diagnostics}

This appendix empirically validates the theoretical constraints established in Appendices~\ref{app:ula_bias} and \ref{app:h_m_sensitivity}. By systematically sweeping the structural parameters of the localized SGLD probe, we isolate their individual physical mechanisms and establish a robust, objective operational configuration for deployment in deep neural networks.

\subsection{Stable Operating Point and Diagnostic Protocol}
\label{app:stable_operating_point}

To prevent confounded measurements, we anchor the diagnostics around an empirically established stable operating baseline. By freezing the system at this baseline, we sweep one structural parameter at a time across a wide numerical range, while holding the remaining parameters fixed. This protocol isolates the physical role of each component of the localized SGLD probe.

Unless otherwise specified, the controlled diagnostics are executed on the singular toy model $K_2(u,v)=u^2v^2$ with a hidden basin floor $L_0=100$ and independent Gaussian evaluation noise $\sigma=0.002$. The generic thermodynamic parameters follow standard empirical SLT configurations: inverse-temperature scale $t=n\beta=10^4$, total SGLD steps $M=100{,}000$, and burn-in ratio $\alpha=0.3$.

Table~\ref{tab:operational_defaults} summarizes the stable operating point used throughout the toy-model diagnostics, together with the corresponding deep-learning batch-allocation parameters. The toy diagnostics use $N=64$ as the default evaluation-group size in order to expose the statistical mechanics of the debiasing correction under controlled noise. In the neural-network experiments, we increase the evaluation group to $N=512$ using vectorized forward passes, as described in Appendix~\ref{app:dnn_vectorized_N}, to further suppress the residual uncertainty of the noise correction.

\begin{table}[ht]
\centering
\caption{Default hyperparameters and operating configuration. Unless otherwise stated, these values are held fixed when sweeping a single structural parameter.}
\label{tab:operational_defaults}
\begin{tabular}{llc}
\toprule
Category & Quantity & Value \\
\midrule
Toy model & Localization and $N$ diagnostics & $K_2(u,v)=u^2v^2$ \\
Toy model & Finite-step calibration & $K_1(u)=u^2$ and $K_2(u,v)=u^2v^2$ \\
Toy model & Hidden basin floor $L_0$ & $100$ \\
Toy model & Evaluation noise standard deviation $\sigma$ & $0.002$ \\
Common & Inverse-temperature scale $t=n\beta$ & $10{,}000$ \\
Common & Total SGLD steps $M$ & $100{,}000$ \\
Common & Burn-in ratio $\alpha$ & $0.3$ \\
Common & Independent MCMC trials & $5$ \\
Default toy setting & Localization radius $h$ & $2.0$ \\
Default toy setting & Base step size $\texttt{base\_lr}$ & $0.05$ \\
Default toy setting & Evaluation group size $N$ & $64$ \\
DNN setting & Adaptive localization base radius $h$ & $1.0$ \\
DNN setting & Base step size $\texttt{base\_lr}$ & $0.05$ \\
DNN setting & Evaluation group size $N$ & $512$ \\
DNN setting & Evaluation mini-batch size $B_{\mathrm{eval}}$ & $64$ \\
DNN setting & Gradient mini-batch size $B_{\mathrm{grad}}$ & $1024$ \\
\bottomrule
\end{tabular}
\end{table}

These values represent an operational balance among geometric fidelity, thermodynamic mixing, and computational precision. The subsequent sweeps perturb $h$, $\texttt{base\_lr}$, and $N$ around this baseline to expose their individual failure modes and stability plateaus.

\subsection{Localization Sensitivity to $h$}
\label{app:h_sensitivity}
The localization radius $h$ should not be interpreted as a purely numerical hyperparameter. In the off-equilibrium setting, it defines the spatial scale of the local chart. If $h$ is too small, the chain cannot access the relevant singular directions, and the resulting estimate reflects the Gaussian tether rather than the loss geometry. If $h$ is too large, the nonzero gradient at the checkpoint can dominate the local dynamics and drive the chain toward lower-loss regions outside the intended observation window. Stable LLC tracking, therefore, requires an intermediate regime in which the chart is large enough to contain the relevant local geometry but small enough to preserve checkpoint locality.

Having fixed the baseline configuration in Table~\ref{tab:operational_defaults}, we first vary the localization radius $h$. Appendix~\ref{app:h_m_sensitivity} theoretically posits that $h$ functions as a saturation knob: once the chart is large enough to contain the relevant local geometry, further increases should induce only higher-order corrections. Table~\ref{tab:operational_h} reports the empirical estimates across a wide spectrum of $h$.

\begin{table}[ht]
\centering
\caption{Localization sensitivity to $h$ on the singular toy model $K_2(u,v)=u^2v^2$. }
\label{tab:operational_h}
\begin{tabular}{rcc}
\toprule
$h$ & Oracle Mean-based & SIVE \\
\midrule
$0.05$ & $1.7305 \pm 0.1120$ & $1.7330 \pm 0.1469$ \\
$0.10$ & $0.4942 \pm 0.0857$ & $0.5077 \pm 0.1172$ \\
$0.20$ & $0.3264 \pm 0.1011$ & $0.3376 \pm 0.1462$ \\
$0.50$ & $0.3763 \pm 0.0760$ & $0.4264 \pm 0.0969$ \\
$1.00$ & $0.3866 \pm 0.0576$ & $0.4427 \pm 0.0957$ \\
$2.00$ & $0.3970 \pm 0.0698$ & $0.4547 \pm 0.1040$ \\
$5.00$ & $0.3944 \pm 0.0665$ & $0.4551 \pm 0.1043$ \\
$10.00$ & $0.3938 \pm 0.0657$ & $0.4549 \pm 0.1041$ \\
$20.00$ & $0.3938 \pm 0.0658$ & $0.4550 \pm 0.1042$ \\
$50.00$ & $0.3938 \pm 0.0658$ & $0.4550 \pm 0.1042$ \\
\bottomrule
\end{tabular}
\end{table}

The empirical behavior strongly corroborates the theoretical predictions, separating into two distinct geometric regimes:
\begin{itemize}[leftmargin=2em]
    \item \textbf{The Underexploration Penalty ($h \le 0.5$):} When the chart is overly constrained, the Gaussian prior artificially dominates the local geometry. At $h=0.05$, the probe is trapped on a steep slope, resulting in severe discretization stiffness and inflated estimates ($\approx 1.73$). At $h=0.2$, the geometric squeezing causes a structural collapse of the measurement ($\approx 0.32$), reflecting the $\mathcal{O}(h^{-2})$ penalty derived in Proposition~\ref{prop:slope_plateau}.
    \item \textbf{The Onset of the Stability Plateau ($h \ge 1.0$):} Once $h$ safely encompasses the effective volume of the singularity, the estimator enters a robust plateau. At $h=1.0$, the measurement reaches $0.4427 \pm 0.0957$, which is statistically indistinguishable from the asymptotic plateau value of $\approx 0.455$ maintained up to $h=50.0$. This invariance confirms that the variance operator isolates energy fluctuations, rendering the measurement structurally robust to the spatial bounds once this critical threshold is crossed.
\end{itemize}

\paragraph{Operational Translation for DNNs.}
While unbounded radii ($h \gg 2.0$) are benign in isolated, single-basin toy models, overparameterized neural networks possess highly non-convex landscapes populated with closely packed, degenerate valleys. In such off-equilibrium environments, an excessively loose Gaussian tether risks allowing the localized SGLD chain to escape across saddle points into adjacent functional basins, violating the core assumption of a checkpoint-local diagnostic. Consequently, we deploy $h=1.0$ as the adaptive localization base radius in our deep learning experiments (Section~\ref{sec:dnn}). This configuration serves as an operational balance: it is the tightest tether that successfully bypasses the $\mathcal{O}(h^{-2})$ underexploration penalty while providing maximal confinement against unintended cross-basin drift.

\subsection{Finite-Step Calibration of \texorpdfstring{\texttt{base\_lr}}{base\_lr}}
\label{app:base_lr_sensitivity}

Because the Moment-Closure barrier (Appendix~\ref{app:ula_singular_barrier}) prevents calculating the exact Unadjusted Langevin Algorithm (ULA) target for singular geometries, we empirically ablate the dimensionless base step size ($\texttt{base\_lr} = \eta t$). We identify the optimal operating point by tracking the cross-trial standard deviation of the privileged Oracle Mean estimator, which serves as a rigorous proxy for thermodynamic mixing stability.

\begin{table}[ht]
\centering
\caption{Finite-step calibration on the singular toy model $K_2(u,v)=u^2v^2$. The operational sweet spot is objectively identified at $\texttt{base\_lr}=0.05$, where the Oracle estimator achieves its minimum cross-trial standard deviation.}
\label{tab:operational_lr}
\begin{tabular}{rccc}
\toprule
$\texttt{base\_lr}$ & Regime Description & Oracle Mean-based & SIVE \\
\midrule
$0.0001$ & Complete Mixing Failure  & $3.3124 \pm 0.7967$ & $4.0592 \pm 2.9188$ \\
$0.0010$ & Poor Mixing (trapped)      & $0.2959 \pm 0.2465$ & $0.2349 \pm 0.1881$ \\
$0.0050$ & Slow Mixing              & $0.3325 \pm 0.1669$ & $0.3747 \pm 0.3230$ \\
$0.0100$ & Approaching Optimal Stability      & $0.3544 \pm 0.0791$ & $0.4046 \pm 0.2046$ \\
$\mathbf{0.0500}$ & \textbf{Optimal Stability (Default)} & $\mathbf{0.3980 \pm 0.0424}$ & $\mathbf{0.4310 \pm 0.0961}$ \\
$0.1000$ & Moderate ULA Inflation   & $0.4538 \pm 0.0682$ & $0.4808 \pm 0.1063$ \\
$0.5000$ & Severe Discretization Instability & $0.5203 \pm 0.1248$ & $3.0707 \pm 4.8986$ \\
$1.0000$ & Numerical Divergence     & $0.5718 \pm 0.1379$ & $9.4489 \pm 17.1410$ \\
\bottomrule
\end{tabular}
\end{table}

As shown in Table~\ref{tab:operational_lr}, minuscule step sizes ($\le 0.005$) fail to traverse the singular geometry within $M$ steps, leaving chains trapped at varying altitudes on the transition slopes (substantial cross-trial variance). Conversely, excessively large step sizes ($\ge 0.5$) cause discrete updates to systematically overshoot the continuous-time Langevin trajectories, leading to severe ULA discretization instability (Appendix~\ref{app:ula_regular}). The value $\texttt{base\_lr} = 0.05$ emerges as the objective operational optimal configuration, minimizing trajectory noise while maintaining robust state exploration.

\subsection{Evaluation-Group Sensitivity to $N$}
\label{app:N_sensitivity}

SIVE dynamically subtracts the empirical noise floor using the within-group sample variance $s_m^2$. By the Law of Total Variance, the raw variance of the grouped empirical losses absorbs a structural artifact equal to $\mathbb{E}[t^2 s_m^2 / N] = t^2 \sigma^2 / N$. For $t=10^4$ and $\sigma=0.002$, this uncorrected artifact scales to $400/N$, significantly dominating the $\mathcal{O}(1)$ geometric signal.

Our debiasing step subtracts this average noise profile $\hat{C} = \frac{t^2}{M}\sum_{m=1}^M \frac{s_m^2}{N}$. However, because $s_m^2$ follows a scaled Chi-squared distribution, the correction term carries an inherent statistical uncertainty. Following standard variance propagation, the dimension-aligned uncertainty (standard deviation) of the correction term scales as:
\begin{equation}
    \mathrm{Std}[\hat{C}] = \left( \frac{400}{N} \right) \sqrt{\frac{2}{M(N-1)}}.
\end{equation}

Table~\ref{tab:operational_n} compares SIVE against the privileged Oracle Mean estimator across varying $N$. To eliminate pseudo-random trajectory variance, we track the paired difference ($\Delta = \text{Oracle} - \text{SIVE}$) computed step-by-step on identical MCMC trajectories.

\begin{table}[ht]
\centering
\caption{Evaluation-group sensitivity on the singular toy model $K_2(u,v)=u^2v^2$. The theoretically predicted statistical uncertainty $\mathrm{Std}[\hat{C}]$ closely matches the empirically observed standard deviation of the paired difference $\Delta$.}
\label{tab:operational_n}
\begin{tabular}{rcccc}
\toprule
$N$
& Oracle Mean-based
& SIVE
& Theoretical $\mathrm{Std}[\hat{C}]$
& Paired Difference $\Delta$ \\
\midrule
$2$    & $0.4057 \pm 0.0483$ & $1.8260 \pm 1.1970$ & $1.0690$ & $-1.4202 \pm 1.1541$ \\
$4$    & $0.3651 \pm 0.0571$ & $0.5277 \pm 0.4193$ & $0.3086$ & $-0.1626 \pm 0.3983$ \\
$8$    & $0.4300 \pm 0.0772$ & $0.4323 \pm 0.3739$ & $0.1010$ & $-0.0023 \pm 0.3729$ \\
$16$   & $0.4300 \pm 0.0772$ & $0.3856 \pm 0.1033$ & $0.0345$ & $ 0.0444 \pm 0.1300$ \\
$32$   & $0.3899 \pm 0.0474$ & $0.3834 \pm 0.0880$ & $0.0120$ & $ 0.0064 \pm 0.0626$ \\
\textbf{64} & $\mathbf{0.3980 \pm 0.0424}$ & $\mathbf{0.4310 \pm 0.0961}$ & $\mathbf{0.0042}$ & $\mathbf{-0.0330 \pm 0.0589}$ \\
$128$  & $0.4286 \pm 0.0820$ & $0.4522 \pm 0.1262$ & $0.0015$ & $-0.0236 \pm 0.0499$ \\
$256$  & $0.4214 \pm 0.0699$ & $0.4613 \pm 0.1098$ & $0.0005$ & $-0.0399 \pm 0.0426$ \\
$512$  & $0.4960 \pm 0.0545$ & $0.5207 \pm 0.0930$ & $0.0002$ & $-0.0248 \pm 0.0408$ \\
$1024$ & $0.3872 \pm 0.0657$ & $0.3664 \pm 0.1059$ & $0.0001$ & $ 0.0208 \pm 0.0513$ \\
$2048$ & $0.3941 \pm 0.0636$ & $0.3874 \pm 0.0804$ & $0.0000$ & $ 0.0067 \pm 0.0301$ \\
\bottomrule
\end{tabular}
\end{table}

The empirical fluctuations in the paired difference strictly follow the theoretical dimension-aligned uncertainty $\mathrm{Std}[\hat{C}]$. At $N=2$, the residual error ($\pm 1.1541$) is entirely driven by the finite-sample Chi-squared variance of the correction term. As $N$ increases, both the raw noise floor and its corresponding estimation variance are substantially suppressed. At $N=64$ and beyond, the theoretical uncertainty approaches zero, allowing SIVE to robustly lock onto the precise geometric trajectory mapped by the privileged Oracle.

\section{Implementation and Hyperparameter Translation for Deep Neural Networks}
\label{app:dnn_implementation}

Transitioning the localized SGLD probe from controlled toy models to overparameterized neural networks introduces practical challenges: the true local minimum $L_k^\star$ is inaccessible mid-training, precluding objective on-the-fly calibration of sampler bias. To deploy the estimator robustly in this setting, we rely on controlled hyperparameter transfer from the earlier theoretical ablations, explicit decoupling of stochastic noise sources, and hardware-accelerated vectorization.

\subsection{Thermodynamic Translation and Adaptive Localization}
\label{app:dnn_adaptive_localization}

Without access to the absolute local minimum mid-training, runtime calibration of the Unadjusted Langevin Algorithm (ULA) discretization bias is infeasible. We therefore transfer the optimal thermodynamic step size identified in the controlled ablations (Appendix~\ref{app:base_lr_sensitivity}), uniformly fixing $\texttt{base\_lr}=0.05$ across all deep learning experiments.

To maintain scale invariance across different architectures and training phases, we dynamically scale the physical localization radius using the root mean square (RMS) of the network weights at the checkpoint $\theta_{t_k}$:
\begin{equation}
    h_{\mathrm{adaptive}} = h \frac{\|\theta_{t_k}\|}{\sqrt{d}},
\end{equation}
where $d$ is the total number of parameters. This dynamic scaling ensures that the Gaussian tether imposes a consistent relative spatial perturbation per parameter dimension, regardless of the macroscopic norm of the weights.

\subsection{Decoupling Gradient Dynamics and Observation Noise}
\label{app:dnn_noise_decoupling}

Neural network experiments rely on mini-batch subsampling, which injects stochasticity into both the SGLD parameter dynamics and the geometric loss measurements. To control these effects independently, we separate the gradient mini-batch size $B_{\mathrm{grad}}$ (used strictly for the Langevin drift) from the evaluation mini-batch size $B_{\mathrm{eval}}$ (used strictly to measure the local energy state).

To keep the Langevin drift trajectory aligned with the full empirical-loss gradient and suppress trajectory noise, we utilize a large gradient mini-batch, fixing $B_{\mathrm{grad}}=1024$ (approximately 1.7\% of the MNIST training data). This stabilization of the Markov chain remains computationally practical on modern accelerators because gradients are computed only once per SGLD step. In contrast, $B_{\mathrm{eval}}$ is controlled independently, as it governs the observation noise injected into the variance-based LLC measurement.

\subsection{Vectorized Scaling of the Evaluation Group $N$}
\label{app:dnn_vectorized_N}

At each SGLD step, our noise-debiasing methodology requires sampling $N$ independent evaluation mini-batches, each of size $B_{\mathrm{eval}}$. If $\sigma_{\mathrm{single}}^2$ denotes the variance of the per-sample loss, the additive observation variance embedded within the grouped empirical mean scales according to:
\begin{equation}
    \operatorname{Var}[\bar L_m] = \frac{\sigma_{\mathrm{single}}^2}{N B_{\mathrm{eval}}}.
\end{equation}
Consequently, the raw observation-noise floor is bounded by the total number of evaluated examples, $N B_{\mathrm{eval}}$. More critically, the statistical precision of the empirical noise correction term depends entirely on the degrees of freedom ($N-1$) of the within-group sample variance $s_m^2$. For a fixed evaluation budget, allocating resources to a larger $N$ substantially reduces the estimation uncertainty of the debiasing step.

A naive implementation requires $N$ sequential forward passes per SGLD step, inducing severe computational bottlenecks. We circumvent this constraint by utilizing PyTorch's functional APIs (\texttt{torch.func.functional\_call}). We generate an index tensor of shape $(N, B_{\mathrm{eval}})$, flatten it into a single batched dimension of size $N B_{\mathrm{eval}}$, execute a singular vectorized forward pass, and reshape the unreduced per-sample losses back to their $(N, B_{\mathrm{eval}})$ topological structure. 

This circumvents the standard memory-computation trade-off, allowing the empirical debiasing term to be estimated with high statistical precision without prohibitive sequential bottlenecking. Whereas the controlled toy-model diagnostics use $N=64$ to expose the scaling of the correction term, the neural-network experiments leverage vectorization to allocate larger evaluation groups, fixing $N=512$ with $B_{\mathrm{eval}}=64$.

\newpage
\section{Raw Results for the DNN Experiment}
\label{app:dnn_raw_results}
This appendix provides the raw numerical records corresponding to the training trajectory plotted in Section~\ref{sec:dnn}. The row labeled epoch $0$ corresponds to the unoptimized randomly initialized network. Strictly speaking, this point is not part of the optimization trajectory and is therefore omitted from the plot in Section~\ref{sec:dnn}; it is included here only as a reference for the raw numerical record.

\textbf{Takeaway:} The numerical records illustrate the different failure modes and sensitivities of the three operational LLC estimators. Under the aligned localization setting used in this experiment, the Online estimator converges to the endpoint value
$\approx 98$, consistent with the scale expected from the corresponding endpoint LLC protocol. However, throughout training it remains relatively compressed, varying mostly between $\approx 100$ and $\approx 240$, and therefore provides only a weak dynamic signal.

The Retrospective estimator exhibits the opposite pathology. Because it anchors all checkpoints to a final loss floor, it starts at an artificially inflated value of $\approx 7019$ and then monotonically decays to the same endpoint value $\approx 98$. This behavior primarily reflects the macroscopic decrease in training loss rather than a purely local change in geometric complexity, illustrating the anchoring artifact discussed in Appendix~\ref{app:error_decomposition}.

In contrast, SIVE removes the explicit additive baseline and produces a non-monotone local fluctuation signal. After a large early-transient value of $\approx 4574$, it rapidly decreases, reaches a broad minimum around $\approx 175$during the mid-training phase, and then rebounds to $\approx 230$ near the end of training. This late-training rebound occurs while the training loss continues to decrease and the test loss has largely saturated. Thus, SIVE separates the large-scale loss descent from a residual local geometric signal and reveals a post-saturation increase in the variance-based operational LLC estimate. This pattern is consistent with the empirical SLT hypothesis that local complexity may rise again after the generalization-improving phase, although the present experiment should be interpreted as diagnostic evidence rather than a definitive claim about overfitting-induced LLC growth.

\begin{center}
\begin{tabular}{cccccc}
\hline
\textbf{Epoch} & \textbf{TrainLoss} & \textbf{TestLoss} & \textbf{Online} & \textbf{Retrospective} & \textbf{SIVE} \\
\hline
0 & 0.0000 & 0.0000 & $305.6935 \pm 14.5779$ & $7019.5696 \pm 5.4915$ & $4574.2894 \pm 226.4581$ \\
1 & 1.6910 & 0.7625 & $244.4062 \pm 26.2280$ & $4305.8864 \pm 4.8256$ & $1410.2531 \pm 53.4990$ \\
2 & 0.5554 & 0.4224 & $206.8323 \pm 13.2536$ & $3284.2616 \pm 4.8907$ & $620.6395 \pm 32.6893$ \\
3 & 0.3952 & 0.3453 & $217.0289 \pm 12.1969$ & $2903.8002 \pm 4.1819$ & $448.4780 \pm 21.8573$ \\
4 & 0.3444 & 0.3126 & $207.5439 \pm 10.1416$ & $2663.9290 \pm 4.7403$ & $409.7792 \pm 28.4689$ \\
5 & 0.3151 & 0.2910 & $200.3314 \pm 8.5917$ & $2476.6437 \pm 4.2343$ & $365.6176 \pm 25.9233$ \\
6 & 0.2936 & 0.2715 & $191.7686 \pm 7.0958$ & $2313.8303 \pm 5.2410$ & $352.5791 \pm 20.2830$ \\
7 & 0.2756 & 0.2574 & $191.4157 \pm 7.9695$ & $2164.7308 \pm 4.4652$ & $328.2269 \pm 8.3789$ \\
8 & 0.2591 & 0.2431 & $195.1149 \pm 8.3137$ & $2029.9211 \pm 4.2763$ & $319.1053 \pm 24.5472$ \\
9 & 0.2439 & 0.2298 & $181.2629 \pm 10.9457$ & $1905.5959 \pm 5.2171$ & $304.8436 \pm 11.3001$ \\
10 & 0.2296 & 0.2171 & $175.6204 \pm 9.6474$ & $1794.3991 \pm 4.3807$ & $285.1773 \pm 21.4857$ \\
\hline
\end{tabular}
\end{center}

\begin{center}
\begin{tabular}{cccccc}
\hline
\textbf{Epoch} & \textbf{TrainLoss} & \textbf{TestLoss} & \textbf{Online} & \textbf{Retrospective} & \textbf{SIVE} \\
\hline
11 & 0.2174 & 0.2089 & $176.7544 \pm 10.7161$ & $1691.8545 \pm 3.7182$ & $273.6002 \pm 13.8222$ \\
12 & 0.2065 & 0.1971 & $177.6797 \pm 10.5197$ & $1600.5892 \pm 4.3814$ & $256.3642 \pm 21.8803$ \\
13 & 0.1962 & 0.1897 & $167.2317 \pm 6.2234$ & $1516.0066 \pm 4.3358$ & $252.8902 \pm 12.5175$ \\
14 & 0.1867 & 0.1808 & $162.4368 \pm 12.9894$ & $1437.9914 \pm 4.1109$ & $238.6423 \pm 16.4761$ \\
15 & 0.1780 & 0.1738 & $163.5365 \pm 7.3923$ & $1367.4095 \pm 4.4043$ & $233.3073 \pm 14.0815$ \\
16 & 0.1702 & 0.1653 & $149.5657 \pm 5.8629$ & $1300.3964 \pm 3.8090$ & $225.4638 \pm 9.8794$ \\
17 & 0.1632 & 0.1637 & $159.6562 \pm 10.8478$ & $1239.9350 \pm 5.2891$ & $222.7569 \pm 11.4760$ \\
18 & 0.1560 & 0.1568 & $156.7316 \pm 15.6557$ & $1182.4883 \pm 4.9130$ & $216.7841 \pm 10.4439$ \\
19 & 0.1496 & 0.1512 & $149.7918 \pm 11.4491$ & $1129.9980 \pm 3.8675$ & $208.9405 \pm 18.4113$ \\
20 & 0.1437 & 0.1474 & $146.7779 \pm 5.0979$ & $1080.4513 \pm 4.0143$ & $209.1843 \pm 15.4987$ \\

21 & 0.1380 & 0.1429 & $142.8559 \pm 5.8127$ & $1033.9574 \pm 4.0909$ & $207.6137 \pm 19.3054$ \\
22 & 0.1330 & 0.1384 & $146.3999 \pm 12.4197$ & $990.9628 \pm 3.6140$ & $197.5283 \pm 7.1382$ \\
23 & 0.1283 & 0.1349 & $139.5860 \pm 4.0991$ & $951.8054 \pm 4.6580$ & $198.1794 \pm 7.5746$ \\
24 & 0.1237 & 0.1314 & $141.3813 \pm 5.4054$ & $913.5689 \pm 4.3656$ & $198.8901 \pm 13.2100$ \\
25 & 0.1193 & 0.1274 & $133.3867 \pm 4.3371$ & $878.7116 \pm 3.5820$ & $196.3425 \pm 11.1242$ \\
26 & 0.1156 & 0.1233 & $137.6759 \pm 11.3610$ & $845.7357 \pm 4.6726$ & $183.7516 \pm 8.5394$ \\
27 & 0.1117 & 0.1207 & $129.4046 \pm 4.0028$ & $814.2926 \pm 4.5250$ & $187.3642 \pm 5.0560$ \\
28 & 0.1082 & 0.1206 & $128.4466 \pm 6.9600$ & $784.0671 \pm 3.9038$ & $183.2612 \pm 10.4588$ \\
29 & 0.1046 & 0.1184 & $133.4491 \pm 7.6373$ & $755.5153 \pm 4.2420$ & $180.5470 \pm 14.1616$ \\
30 & 0.1015 & 0.1148 & $127.6226 \pm 11.0951$ & $729.8281 \pm 4.3538$ & $185.0079 \pm 11.8786$ \\
31 & 0.0983 & 0.1145 & $124.5320 \pm 7.6855$ & $703.8886 \pm 4.5457$ & $184.4548 \pm 5.0679$ \\
32 & 0.0954 & 0.1109 & $123.3168 \pm 5.3326$ & $679.8395 \pm 3.9311$ & $184.5719 \pm 11.7291$ \\
33 & 0.0928 & 0.1086 & $119.5022 \pm 3.1709$ & $656.9905 \pm 4.1800$ & $176.1066 \pm 11.2075$ \\
34 & 0.0900 & 0.1050 & $121.5010 \pm 8.4044$ & $634.9885 \pm 4.7712$ & $186.5789 \pm 12.1417$ \\
35 & 0.0875 & 0.1048 & $119.7357 \pm 6.4843$ & $613.3129 \pm 3.8927$ & $185.5025 \pm 7.1835$ \\
36 & 0.0849 & 0.1035 & $121.1709 \pm 5.2824$ & $593.9760 \pm 4.2603$ & $173.5784 \pm 12.8240$ \\
37 & 0.0827 & 0.1007 & $117.7214 \pm 7.4153$ & $575.0621 \pm 3.5533$ & $182.1530 \pm 3.6132$ \\
38 & 0.0804 & 0.1051 & $118.2163 \pm 3.3086$ & $556.0740 \pm 4.4306$ & $179.1000 \pm 11.0378$ \\
39 & 0.0780 & 0.1006 & $118.9521 \pm 5.6537$ & $539.2346 \pm 3.3620$ & $177.0898 \pm 6.6483$ \\
40 & 0.0762 & 0.0987 & $122.6461 \pm 13.4004$ & $522.8932 \pm 3.8716$ & $181.6877 \pm 5.0322$ \\
41 & 0.0741 & 0.0976 & $112.2805 \pm 5.1730$ & $506.0484 \pm 3.4452$ & $181.8295 \pm 7.1780$ \\
42 & 0.0722 & 0.0957 & $115.8127 \pm 1.2396$ & $490.6026 \pm 4.3777$ & $177.6771 \pm 10.9349$ \\
43 & 0.0704 & 0.0940 & $113.9570 \pm 4.4846$ & $475.9606 \pm 3.9007$ & $176.4015 \pm 10.5901$ \\
44 & 0.0685 & 0.0939 & $109.8008 \pm 2.5696$ & $462.5056 \pm 4.2692$ & $182.0040 \pm 7.6186$ \\
45 & 0.0667 & 0.0955 & $113.3571 \pm 5.2638$ & $447.8485 \pm 4.4485$ & $182.6776 \pm 11.2513$ \\
46 & 0.0650 & 0.0917 & $110.3842 \pm 5.3527$ & $434.8910 \pm 3.7966$ & $180.2799 \pm 8.1888$ \\
47 & 0.0634 & 0.0921 & $113.5339 \pm 6.7210$ & $422.7803 \pm 4.6414$ & $184.8175 \pm 6.3553$ \\
48 & 0.0618 & 0.0902 & $114.4775 \pm 13.0611$ & $409.4123 \pm 4.1289$ & $183.5730 \pm 10.6266$ \\
49 & 0.0605 & 0.0888 & $110.4794 \pm 8.4105$ & $397.7403 \pm 4.5112$ & $188.9683 \pm 10.3264$ \\
50 & 0.0586 & 0.0913 & $109.6469 \pm 9.5548$ & $385.6345 \pm 3.7221$ & $185.6771 \pm 5.0207$ \\
51 & 0.0574 & 0.0876 & $110.1263 \pm 10.3337$ & $374.4715 \pm 4.6442$ & $188.7490 \pm 11.6477$ \\
52 & 0.0560 & 0.0884 & $105.1944 \pm 2.6861$ & $363.3880 \pm 3.7858$ & $183.7768 \pm 9.5185$ \\
53 & 0.0546 & 0.0870 & $108.8034 \pm 9.3791$ & $353.1681 \pm 3.8898$ & $187.9684 \pm 7.8213$ \\
54 & 0.0534 & 0.0857 & $108.7127 \pm 3.0872$ & $343.6613 \pm 4.2319$ & $184.2854 \pm 9.8979$ \\
55 & 0.0520 & 0.0855 & $111.2375 \pm 5.2363$ & $334.4012 \pm 4.2751$ & $184.1957 \pm 6.7505$ \\
\hline
\end{tabular}
\end{center}

\begin{center}
\begin{tabular}{cccccc}
\hline
\textbf{Epoch} & \textbf{TrainLoss} & \textbf{TestLoss} & \textbf{Online} & \textbf{Retrospective} & \textbf{SIVE} \\
\hline
56 & 0.0509 & 0.0843 & $107.9592 \pm 5.6434$ & $324.8052 \pm 3.9325$ & $182.8753 \pm 10.3242$ \\
57 & 0.0496 & 0.0842 & $109.9568 \pm 6.9310$ & $315.8415 \pm 4.1797$ & $191.5437 \pm 10.1094$ \\
58 & 0.0485 & 0.0834 & $102.4429 \pm 4.7615$ & $307.0733 \pm 5.0449$ & $186.5147 \pm 7.6503$ \\
59 & 0.0475 & 0.0850 & $105.6366 \pm 2.2982$ & $298.5531 \pm 4.5328$ & $194.4850 \pm 3.9289$ \\
60 & 0.0463 & 0.0826 & $106.4805 \pm 7.0970$ & $290.0135 \pm 4.5124$ & $179.8349 \pm 7.1653$ \\

61 & 0.0452 & 0.0826 & $106.3420 \pm 3.9432$ & $283.0545 \pm 3.8896$ & $190.7001 \pm 6.5060$ \\
62 & 0.0441 & 0.0827 & $106.8016 \pm 7.2108$ & $274.9521 \pm 4.0158$ & $196.4563 \pm 6.2090$ \\
63 & 0.0432 & 0.0821 & $104.9632 \pm 7.5149$ & $267.2266 \pm 4.3221$ & $193.1065 \pm 7.0322$ \\
64 & 0.0422 & 0.0835 & $106.5517 \pm 9.0993$ & $260.5114 \pm 4.2493$ & $186.5114 \pm 10.5238$ \\
65 & 0.0414 & 0.0810 & $105.0814 \pm 3.0953$ & $252.9169 \pm 4.3162$ & $204.4385 \pm 10.0767$ \\
66 & 0.0403 & 0.0811 & $104.7099 \pm 5.3169$ & $245.9779 \pm 4.3195$ & $198.0750 \pm 5.5594$ \\
67 & 0.0394 & 0.0830 & $98.6757 \pm 3.1634$ & $239.2581 \pm 4.1837$ & $194.4495 \pm 15.3281$ \\
68 & 0.0385 & 0.0808 & $101.7513 \pm 7.1594$ & $233.4418 \pm 4.6090$ & $195.5051 \pm 13.8845$ \\
69 & 0.0377 & 0.0816 & $102.8158 \pm 6.7757$ & $226.9487 \pm 4.2211$ & $200.8444 \pm 4.2063$ \\
70 & 0.0370 & 0.0792 & $98.2432 \pm 6.0614$ & $221.1335 \pm 4.5387$ & $193.9073 \pm 6.8682$ \\
71 & 0.0360 & 0.0810 & $98.7801 \pm 2.8220$ & $214.7803 \pm 3.6079$ & $196.2433 \pm 13.6079$ \\
72 & 0.0351 & 0.0801 & $104.1887 \pm 8.6311$ & $209.7997 \pm 4.4860$ & $199.1701 \pm 15.1759$ \\
73 & 0.0344 & 0.0799 & $105.4060 \pm 7.2478$ & $203.6277 \pm 3.9432$ & $200.2327 \pm 13.7595$ \\
74 & 0.0336 & 0.0791 & $97.4877 \pm 6.0078$ & $198.8079 \pm 4.1072$ & $196.1837 \pm 8.8389$ \\
75 & 0.0329 & 0.0786 & $102.3636 \pm 6.1603$ & $192.4719 \pm 4.5084$ & $207.0234 \pm 3.4573$ \\
76 & 0.0320 & 0.0792 & $98.6370 \pm 6.7775$ & $187.8795 \pm 3.8754$ & $207.2840 \pm 11.1474$ \\
77 & 0.0317 & 0.0794 & $96.2403 \pm 4.0268$ & $182.6374 \pm 3.8492$ & $201.2621 \pm 4.4519$ \\
78 & 0.0308 & 0.0790 & $94.4924 \pm 3.2979$ & $177.3108 \pm 4.1519$ & $204.3859 \pm 10.7957$ \\
79 & 0.0301 & 0.0790 & $96.9104 \pm 5.3866$ & $173.1725 \pm 4.6568$ & $204.4113 \pm 7.7208$ \\
80 & 0.0295 & 0.0786 & $97.8887 \pm 4.6285$ & $168.7112 \pm 4.0360$ & $206.6088 \pm 10.6721$ \\
81 & 0.0289 & 0.0785 & $95.6188 \pm 7.0367$ & $164.7156 \pm 4.4970$ & $206.1388 \pm 2.0766$ \\
82 & 0.0283 & 0.0778 & $94.0747 \pm 3.3084$ & $159.6021 \pm 4.8643$ & $208.8186 \pm 4.7671$ \\
83 & 0.0276 & 0.0786 & $94.0986 \pm 3.3649$ & $155.5452 \pm 4.7440$ & $204.6620 \pm 4.9829$ \\
84 & 0.0271 & 0.0790 & $95.4788 \pm 5.2670$ & $151.6849 \pm 4.8309$ & $218.3651 \pm 8.9207$ \\
85 & 0.0264 & 0.0775 & $97.7503 \pm 6.8287$ & $147.6950 \pm 4.1490$ & $215.2804 \pm 3.3576$ \\
86 & 0.0258 & 0.0785 & $95.8649 \pm 3.8486$ & $143.0133 \pm 4.0275$ & $215.4440 \pm 6.3519$ \\
87 & 0.0252 & 0.0777 & $94.9082 \pm 5.6212$ & $139.4838 \pm 5.3908$ & $211.8701 \pm 13.0789$ \\
88 & 0.0247 & 0.0776 & $96.8846 \pm 6.2466$ & $135.2284 \pm 4.4575$ & $201.7611 \pm 10.4854$ \\
89 & 0.0242 & 0.0780 & $98.2371 \pm 3.7576$ & $131.6475 \pm 4.2468$ & $219.1141 \pm 10.9086$ \\
90 & 0.0239 & 0.0817 & $97.1279 \pm 8.7894$ & $128.0168 \pm 4.0151$ & $218.3517 \pm 11.4729$ \\
91 & 0.0233 & 0.0783 & $95.4048 \pm 4.5624$ & $124.6006 \pm 4.5566$ & $222.7664 \pm 9.8712$ \\
92 & 0.0228 & 0.0782 & $99.0333 \pm 3.5412$ & $121.5486 \pm 4.2370$ & $228.6115 \pm 8.5890$ \\
93 & 0.0222 & 0.0781 & $98.0703 \pm 4.2175$ & $117.9862 \pm 4.0060$ & $225.0125 \pm 8.8390$ \\
94 & 0.0219 & 0.0777 & $96.8696 \pm 15.9150$ & $114.9744 \pm 4.2816$ & $217.6008 \pm 7.9707$ \\
95 & 0.0214 & 0.0785 & $92.3407 \pm 5.6190$ & $111.3930 \pm 4.6089$ & $219.7210 \pm 12.1337$ \\
96 & 0.0210 & 0.0773 & $92.6616 \pm 3.2542$ & $108.7914 \pm 3.9677$ & $230.0575 \pm 8.4167$ \\
97 & 0.0206 & 0.0790 & $96.8279 \pm 6.2904$ & $105.7771 \pm 3.7369$ & $227.1374 \pm 10.2024$ \\
98 & 0.0200 & 0.0774 & $96.9535 \pm 5.2754$ & $102.8166 \pm 4.3801$ & $228.8167 \pm 7.3517$ \\
99 & 0.0197 & 0.0771 & $96.9922 \pm 5.8961$ & $100.0824 \pm 3.8862$ & $228.5997 \pm 8.8655$ \\
100 & 0.0191 & 0.0784 & $97.7084 \pm 4.2244$ & $97.7084 \pm 4.2244$ & $229.1365 \pm 5.8855$ \\
\hline
\end{tabular}
\end{center}

\end{document}